\newcommand{\corr}{(\Letter)}
\renewcommand{\vec}[1]{\mathbf{#1}}
\newcommand{\vecgreek}[1]{\boldsymbol{#1}}
\newcommand{\mtx}[1]{\mathbf{#1}}
\DeclareMathOperator*{\argmin}{arg\,min}
\newcommand{\Ltwonorm}[1]{\left\Vert{{#1}}\right\Vert_2^2}
\begin{document}

\title{How Much Training Data is Memorized in Overparameterized Autoencoders? An Inverse Problem Perspective on Memorization Evaluation}
\toctitle{How Much Training Data is Memorized in Overparameterized Autoencoders? An Inverse Problem Perspective on Memorization Evaluation}

\titlerunning{An Inverse Problem Perspective on Memorization Evaluation}
%

\author{Koren Abitbul \corr \and
Yehuda Dar}
\institute{Department of Computer Science, Ben-Gurion University of the Negev 
\email{korenab@post.bgu.ac.il, ydar@bgu.ac.il}}

\tocauthor{Koren Abitbul, Yehuda Dar}

\authorrunning{K. Abitbul and Y. Dar}

%
\maketitle              
\begin{abstract}
Overparameterized autoencoder models often memorize their training data. For image data, memorization is often examined by using the trained autoencoder to recover missing regions in its training images (that were used only in their complete forms in the training).
In this paper, we propose an inverse problem perspective for the study of memorization. 
 Given a degraded training image, we define the recovery of the original training image as an inverse problem and formulate it as an optimization task. In our inverse problem, we use the trained autoencoder to implicitly define a regularizer for the particular training dataset that we aim to retrieve from. We develop the intricate optimization task into a practical method that iteratively applies the trained autoencoder and relatively simple computations that estimate and address the unknown degradation operator. We evaluate our method for blind inpainting where the goal is to recover training images from degradation of many missing pixels in an unknown pattern. We examine various deep autoencoder architectures, such as fully connected and U-Net (with various nonlinearities and at diverse train loss values), and show that our method significantly outperforms previous memorization-evaluation methods that recover training data from autoencoders. Importantly, our method greatly improves the recovery performance also in settings that were previously considered highly challenging, and even impractical, for such recovery and memorization evaluation. 
 
\keywords{Memorization \and Overparameterized autoencoders \and Inverse problems \and Plug-and-play priors}
\end{abstract}

\section{Introduction}

Deep neural networks (DNNs) are usually overparameterized models with many trainable parameters that can lead to memorization of their training samples. This modern overparameterized regime differs from the classical regime where overfitting is avoided (by small models, strong regularization, etc.) and the general data pattern is learned without significant memorization. 
Studying the memorization capabilities of DNNs helps us to better understand their behavior and provide important insights into the common practice.

\begin{figure*}[t]
     \centering
     \includegraphics[width=\textwidth]{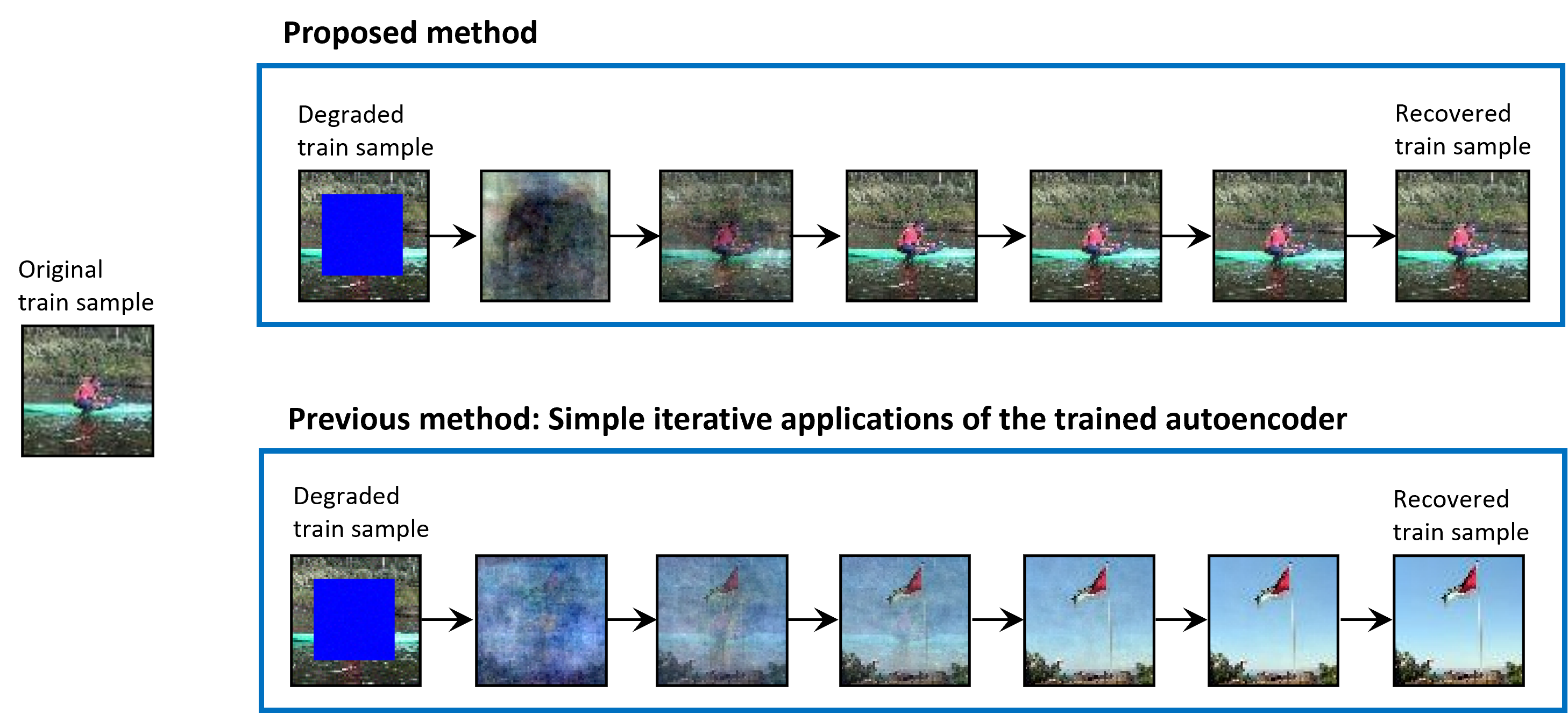}
     \caption{Iterative recovery of a degraded training image using our proposed approach (top frame) and the method from previous works (bottom frame).}
     \label{fig:recovery_intro_blue}
\end{figure*}

The associative memory of overparameterized autoencoder DNNs is studied in \cite{radhakrishnan2018downsampling,radhakrishnan2018memorization,radhakrishnan2020overparameterized,jiang2020associative,nouri2023eigen} by using the trained autoencoder to recover its training data from degraded forms. 
The autoencoder architecture typically consists of an encoder that maps the input to a latent space representation of a lower dimension, and a decoder that reconstructs the input from the latent representation back to the input space. The training loss often seeks to minimize the mean squared error (MSE) between the training inputs and their reconstructed outputs. This architecture and training procedure can lead to associative memorization of training data as attractors, i.e., fixed points (of the autoencoder function) that inputs from their surrounding region can converge to by iterative application of the autoencoder. 
Accordingly, it was shown in \cite{radhakrishnan2018downsampling,radhakrishnan2018memorization,radhakrishnan2020overparameterized,jiang2020associative,nouri2023eigen} that a training sample can be recovered from its degraded version by a simple iterative application of the trained autoencoder. This approach to memorization evaluation via training data recovery was proved useful only under restrictive conditions such as very specific activation functions, sufficiently small training dataset size, and sufficiently low train loss. 

In this work, we provide a new perspective on memorization evaluation via recovery of training data from overparameterized autoencoders. We define an inverse problem that aims to recover a training sample from its degraded version, while not knowing the degradation operator. This inverse problem is formulated as an optimization with a regularizer that implicitly depends on the overfitting of training data in the trained autoencoder. To practically solve this optimization problem, we employ the alternating direction method of multipliers (ADMM) \cite{boyd2011distributed} and extend the plug-and-play priors \cite{venkatakrishnan2013plug} to our task. Whereas the plug-and-play priors concept is usually used for solving imaging inverse problems based on iterative application of a black-box image denoiser \cite{venkatakrishnan2013plug,sreehari2016plug,dar2016postprocessing,rond2016poisson}, we use it to solve our training data recovery problem based on the black-box application of the given autoencoder (which is not necessarily trained to be a denoiser). Consequently, we get an iterative algorithm that not only iteratively applies the trained autoencoder (as in previous works \cite{radhakrishnan2018downsampling,radhakrishnan2018memorization,radhakrishnan2020overparameterized,jiang2020associative,nouri2023eigen}), but also estimates the degradation operator and attempts to invert it. 

Our new approach to recovery of the autoencoder's training data significantly outperforms previous recovery methods. The recovery rate, i.e., the percentage of the training dataset that can be accurately recovered, is significantly increased by using our method. 
Moreover, \textbf{our method is able to recover training data in settings where previous methods performed very weakly and even completely failed to recover training data at all}. Hence, our method greatly contributes to the empirical analysis and evaluation of memorization in autoencoders.

This paper is organized as follows. In Section \ref{sec:Recovery of Training Data as an Inverse Problem} we define our inverse problem perspective on memorization evaluation. In Section \ref{sec:A Practical Algorithm for Training Data Recovery} we mathematically develop the proposed algorithm for training data recovery. In Section \ref{sec:Experimental Results} we show and discuss our experimental results. Section \ref{sec:Conclusion} concludes this paper. The mathematical proofs, additional empirical details and experimental results are provided in the Appendices.

\section{Recovery of Training Data as an Inverse Problem}
\label{sec:Recovery of Training Data as an Inverse Problem}
\subsection{Overparameterized Autoencoders: A Definition}
\label{subsec:Overparameterized Autoencoders - A Definition}
Let us start with a general definition of an autoencoder. 
\begin{definition}
\label{definition: autoencoder - general}
An autoencoder, $f: \mathbb{R}^d \rightarrow \mathbb{R}^d$ can be formulated as 
\begin{equation}
    f(\vec{x}) = f_{\rm dec}( f_{\rm enc}(\vec{x}) )   ~~~\text{for}~~\vec{x}\in\mathbb{R}^d
\end{equation}
where $f_{\rm enc}: \mathbb{R}^d \rightarrow \mathbb{R}^m$ is an encoder from the $d$-dimensional input space to an $m$-dimensional latent space, and $f_{\rm dec}: \mathbb{R}^m \rightarrow \mathbb{R}^d$ is a decoder from the latent space back to the input space. 
\end{definition}

We consider an autoencoder $f$ to be overparameterized if it has the ability to perfectly fit its $n$ training samples $\vec{x}_1,\dots,\vec{x}_n \in \mathbb{R}^d$. Namely, the perfect fitting condition implies that each of the training samples is a fixed point of the trained autoencoder 
\begin{equation}
\label{eq: autoencoder perfect fitting as fixed point condition}
    f(\vec{x}_i) = \vec{x}_i, ~~i=1,\dots,n. 
\end{equation}
In practice, the perfect fitting condition (\ref{eq: autoencoder perfect fitting as fixed point condition}) can be approximate due to numerical precision or early stopping of the training. Moreover, perfect fitting usually requires a latent space dimension (denoted as $m$ in Definition \ref{definition: autoencoder - general}) larger than the number of training samples; this condition appeared, for example, in \cite{radhakrishnan2020overparameterized} for autoencoder neural networks, and in \cite{dar2020subspace} for PCA-based linear autoencoders. 

\subsection{The Recovery Problem}
\label{subsec:The Recovery Problem}

We are given an overparameterized autoencoder $f$ that was trained to perfectly (or approximately) fit its training data. We also get  a degraded version of the autoencoder's $j^{\rm th}$ training sample  
\begin{equation}
\label{eq: deteriorated training sample}
    \vec{y}_j = \mtx{H}\vec{x}_j + \vecgreek{\epsilon}
\end{equation}
where $\mtx{H}\in\mathbb{R}^{d\times d}$ is a deterministic degradation operator that may, e.g., erase components by zeroing the respective entries of a $d$-dimensional vector; and $\vecgreek{\epsilon}\sim\mathcal{N}(\vec{0},\sigma_{\epsilon}^2\mtx{I}_d)$ is a $d$-dimensional Gaussian noise vector.

The recovery task aims to estimate the original training sample $\vec{x}_j$ given its degraded version $\vec{y}_j$ and the trained autoencoder $f$. The recovery performance is evaluated in terms of the mean squared error (MSE) 
\begin{equation}
\label{eq: MSE of reconstruction}
    \mathcal{E}_{\rm rec} \left(\widehat{\vec{x}}_j,\vec{x}_j\right) = \frac{1}{d}\Ltwonorm{\widehat{\vec{x}}_j - \vec{x}_j}
\end{equation}
where $\widehat{\vec{x}}_j$ is the estimate of $\vec{x}_j$. 
Note that we consider here the recovery of a single training sample by using only its degraded version and the trained autoencoder. Importantly, the specific degradation operator $\mtx{H}$ is unknown for the recovery task, although the general type of $\mtx{H}$ is assumed to be known (e.g., it is known that $\mtx{H}$ is a pixel erasure operator but without knowing how many and which pixels were erased and what are their new values).

\subsection{The Inverse Problem Perspective to Training Data Recovery}
\label{subsec:The Inverse Problem Perspective to Training Data Recovery}

The recovery problem in Section \ref{subsec:The Recovery Problem} can be considered as an inverse problem whose optimization form is 
\begin{align}
\label{eq: inverse problem with unknown H}
    \Big\{ \widehat{\vec{x}}, \widehat{\mtx{H}} \Big\}= \underset{\vec{x}\in\mathbb{R}^d, \widetilde{\mtx{H}}\in\mathbb{R}^{d\times d}}{\arg\min} \Ltwonorm{ \widetilde{\mtx{H}}\vec{x} - \vec{y}} + \lambda_1 s(\vec{x}) + \lambda_2 \phi(\widetilde{\mtx{H}}) 
\end{align}
where $s:\mathbb{R}^d \rightarrow \mathbb{R}_{\ge0}$ is a regularizer that returns a smaller value for a more probable estimate $\widehat{\vec{x}}$ (to ease notations, the training sample indexing $j$ is removed from $\widehat{\vec{x}}_j$ and $\vec{y}_j$). Also, $\phi: \mathbb{R}^{d\times d} \rightarrow \mathbb{R}_{\ge0}$ is a regularizer that returns a smaller value for a more probable estimate for $\mtx{H}$. 
The regularization parameters $\lambda_1 >0$ and $\lambda_2 >0$ determine the regularization strength of $s$ and $\phi$, respectively. 

In usual inverse problems, the regularizer $s$ is set according to the type of the recovered data, e.g., natural images \cite{rudin1992nonlinear}. However, in our current problem of training data recovery, we can be more specific in the definition of the regularizer $s$ and use the knowledge that the data of interest was a training sample in the learning of the given autoencoder $f$. 

Let us choose $s$ to be the regularizer that is implicitly induced by the trained autoencoder $f$. Specifically, the trained autoencoder $f$ is based on learning from its training samples. Furthermore, we assume that $f$ is overparameterized and its training allows it to overfit, and possibly to (numerically) perfectly fit, its training samples. Accordingly, there is a potential ability to use the trained autoencoder $f$ to understand if a given data sample was in its training dataset (for example, as explicitly done in membership inference of training data \cite{shokri2017membership,carlini2022membership,hu2022membership}, although not necessarily for autoencoders). We rely on this potential ability to \textit{implicitly} define $s_f:\mathbb{R}^d \rightarrow \mathbb{R}_{\ge0}$ as a function that returns a lower value for an input that is more likely to be from the training dataset of the autoencoder $f$.

The regularizer $\phi$ for the estimate of $\mtx{H}$ is defined based on the general assumptions on the degradation that was applied on the given training sample. In this paper, we focus on degraded training samples with many missing pixels. Namely, $\mtx{H}$ is a pixel erasure operator, but the number and pattern (mask) of missing pixels is unknown in the recovery process. The accurate description of $\phi$ for our setting will be given later in this paper. 

At this point, we are motivated to address the training data recovery task as the inverse problem in (\ref{eq: inverse problem with unknown H}) with the regularizer $s_f$ that is induced by the trained autoencoder $f$ and a suitable $\phi$ for pixel erasure. 
Clearly, the optimization in (\ref{eq: inverse problem with unknown H}) cannot be directly addressed due to having two intricate regularizers in its optimization cost and, more crucially, due to the implicit definition of the regularizer $s_f$ based on $f$. Remarkably, in the next section we develop a practical approach to solve this intricate inverse problem. 

\section{A Practical Algorithm for Training Data Recovery}
\label{sec:A Practical Algorithm for Training Data Recovery}
Now we turn to develop a practical approach to address (\ref{eq: inverse problem with unknown H}) with the regularizers $s_f$ and $\phi$. For a start, we decouple the joint optimization of $\vec{x}$ and $\widetilde{\mtx{H}}$ via alternating minimization that yields an iterative procedure whose $t^{\rm th}$ iteration includes the optimizations  
\begin{align}
\label{eq: alternating minimization - optimize x for fixed H}
    & \widehat{\vec{x}}^{(t)} = \underset{\vec{x}\in\mathbb{R}^d}{\arg\min} \Ltwonorm{ \widehat{\mtx{H}}^{(t-1)}\vec{x} - \vec{y}} + \lambda_1 s_f(\vec{x}) \\
\label{eq: alternating minimization - optimize H for fixed x}
    & \widehat{\mtx{H}}^{(t)} = \underset{ \widetilde{\mtx{H}}\in\mathbb{R}^{d\times d}}{\arg\min} \Ltwonorm{ \widetilde{\mtx{H}}\widehat{\vec{x}}^{(t)} - \vec{y}} + \lambda_2 \phi(\widetilde{\mtx{H}}). 
\end{align}
Here, $\widehat{\vec{x}}^{(t)}$ and $\widehat{\mtx{H}}^{(t)}$ are the estimates in the $t^{\rm th}$ iteration (for $t=1,2,\dots$). The procedure needs to be initialized with some $\widehat{\mtx{H}}^{(0)}\in \mathbb{R}^{d\times d}$ and to have a stopping criterion (see Appendix \ref{appendix:sec:Stopping Criterion of the Proposed Method}). 

Next, we explain how to address the optimizations (\ref{eq: alternating minimization - optimize x for fixed H}) and (\ref{eq: alternating minimization - optimize H for fixed x}) in Sections \ref{subsec: the ADMM inner iteration} and \ref{subsec:how to address the second optimization in alternating minimization}, respectively. 

\subsection{Estimation of a Training Sample \texorpdfstring{$\vec{x}$}{x} for a Given Degradation Operator Estimate}
\label{subsec: the ADMM inner iteration}
Let us address the optimization in (\ref{eq: alternating minimization - optimize x for fixed H}) where a training sample is estimated using the implicit regularizer $s_f$ of a given trained autoencoder $f$, and for a degradation operator estimate $\widehat{\mtx{H}}^{(t-1)}$. In this section we will develop a practical algorithm for this task using the alternating direction method of multipliers (ADMM) technique \cite{boyd2011distributed,afonso2010fast} and by extending the principle of plug and play priors \cite{venkatakrishnan2013plug,sreehari2016plug} to autoencoders.

In this subsection we develop an iterative procedure to address (\ref{eq: alternating minimization - optimize x for fixed H}). To avoid mixing the iteration indexing $t$ of the alternating minimization (\ref{eq: alternating minimization - optimize x for fixed H})-(\ref{eq: alternating minimization - optimize H for fixed x}) with the to-be-defined iteration indexing $k$ of the ADMM procedure of (\ref{eq: alternating minimization - optimize x for fixed H}), we use the following notations
\begin{equation}
    \label{eq:simplifying notations for ADMM}
    \mtx{\Theta}\triangleq\widehat{\mtx{H}}^{(t-1)},~~~~ \widehat{\vecgreek{\xi}}\triangleq \widehat{\vec{x}}^{(t)}.
\end{equation}

\subsubsection{The ADMM Form of the Recovery Optimization.}
\label{subsec:The ADMM Form of the Recovery Optimization}

First, we will apply variable splitting on (\ref{eq: alternating minimization - optimize x for fixed H}), and also use the notations in (\ref{eq:simplifying notations for ADMM}), to get 
\begin{align}
\label{eq: algorithm development - variable splitting}
    \Big\{ \widehat{\vecgreek{\xi}}, \widehat{\vec{v}} \Big\} = & \argmin_{\vec{x},\vec{v}\in\mathbb{R}^d} \Ltwonorm{ \mtx{\Theta} \vec{x} - \vec{y}} + \lambda_1 s_f(\vec{v})
    \\ \nonumber
    & \text{subject to}~\vec{x}=\vec{v}
\end{align}
where $\vec{v}\in\mathbb{R}^d$ is an auxiliary vector due to the split. 
Next, we develop (\ref{eq: algorithm development - variable splitting}) using an augmented Lagrangian and its corresponding iterative solution (of its scaled version) via the method of multipliers \cite[Ch. 2]{boyd2011distributed}, whose $k^{\rm th}$ iteration is 
\begin{align}
\label{eq: algorithm development - method of multipliers}
    & \Big\{ \widehat{\vecgreek{\xi}}^{(k)}, \widehat{\vec{v}}^{(k)} \Big\} =
    \argmin_{\vec{x},\vec{v}\in\mathbb{R}^d} \Ltwonorm{ \mtx{\Theta} \vec{x} - \vec{y}} + \lambda_1 s_f(\vec{v}) + \frac{\gamma}{2} \Ltwonorm{ \vec{x} - \vec{v} + \vec{u}^{(k)}}
    \\ 
    & \vec{u}^{(k+1)} = \vec{u}^{(k)} + \left(\widehat{\vecgreek{\xi}}^{(k)} - \widehat{\vec{v}}^{(k)}\right) 
\end{align}
where $\vec{u}^{(k)}\in\mathbb{R}^d$ is the scaled dual-variable and $\gamma>0$ is an auxiliary parameter that are introduced in the Lagrangian.

Then, applying one step of alternating minimization on (\ref{eq: algorithm development - method of multipliers}) provides the iterative solution in the ADMM form \cite{boyd2011distributed} whose  $k^{\rm th}$ iteration is 
\begin{align}
\label{eq: algorithm development - method of multipliers - 1}
    & \widehat{\vecgreek{\xi}}^{(k)} = \argmin_{\vec{x}\in\mathbb{R}^d} \Ltwonorm{ \mtx{\Theta} \vec{x} - \vec{y}} + \frac{\gamma}{2} \Ltwonorm{ \vec{x} - \widetilde{\vec{v}}^{(k)}}
    \\ \label{eq: algorithm development - method of multipliers - 2}
    & \widehat{\vec{v}}^{(k)} = \argmin_{\vec{v}\in\mathbb{R}^d} \lambda_1 s_f(\vec{v}) + \frac{\gamma}{2} \Ltwonorm{ \vec{v} - \widetilde{\vecgreek{\xi}}^{(k)}}
    \\\label{eq: algorithm development - method of multipliers - 3}
    & \vec{u}^{(k+1)} = \vec{u}^{(k)} + \left(\widehat{\vecgreek{\xi}}^{(k)} - \widehat{\vec{v}}^{(k)}\right) 
\end{align}
where $\widetilde{\vec{v}}^{(k)}\triangleq \widehat{\vec{v}}^{(k-1)} - \vec{u}^{(k)}$ and $\widetilde{\vecgreek{\xi}}^{(k)}\triangleq \widehat{\vecgreek{\xi}}^{(k)} + \vec{u}^{(k)}$. 

At this point, the main question is how to practically address the optimization in (\ref{eq: algorithm development - method of multipliers - 2}) that includes the implicitly-defined regularizer $s_f$.

\subsubsection{Addressing the Optimization in (\ref{eq: algorithm development - method of multipliers - 2}) and its Implicit Regularizer $s_f$.}
\label{subsec:Addressing the optimization with the implicit regularizer}

To explain how to practically address (\ref{eq: algorithm development - method of multipliers - 2}), we will start by understanding this optimization task under a few assumptions that will be later relaxed. 
For a proper, closed, and convex function $s_f:\mathbb{R}^d \rightarrow \mathbb{R}\cup\{\infty\}$, the optimization in (\ref{eq: algorithm development - method of multipliers - 2}) is a Moreau proximity operator \cite{moreau1965proximite,hertrich2021Convolutional,sreehari2016plug}. We will now turn to prove that there is a class of autoencoder architectures that correspond to Moreau proximity operators, and therefore correspond to the optimization in (\ref{eq: algorithm development - method of multipliers - 2}). Then, this association of (\ref{eq: algorithm development - method of multipliers - 2}) with some autoencoders will support our suggestion to replace (\ref{eq: algorithm development - method of multipliers - 2}) with a black-box application of \textit{any} autoencoder (whose training data is asked to be recovered), regardless to whether it accurately corresponds to the optimization in (\ref{eq: algorithm development - method of multipliers - 2}). 

A similar design philosophy is used in the plug-and-play priors approach \cite{venkatakrishnan2013plug,sreehari2016plug} and its many applications, e.g., \cite{dar2016postprocessing,rond2016poisson,brifman2016turning,chan2017plug,kamilov2017plug}) where a black-box image denoiser is applied within the ADMM iterations instead of solving an optimization that corresponds to a proximity operator for a regularization function that is implicitly defined by the denoiser.

For our current analysis of autoencoders, we start with a mathematical definition.
\begin{definition}
\label{definition: 2-layer tied autoencoder}
A $2$-layer tied autoencoder is defined for a weight matrix $\mtx{W}\in  \mathbb{R}^{m \times d}$ and componentwise activation function $\rho: \mathbb{R}^{m}\rightarrow \mathbb{R}^{m}$ as follows. The encoder is $f_{\rm enc}(\vec{x})=\rho(\mtx{W} \vec{x})$ for $\vec{x}\in\mathbb{R}^d$, and the decoder is $f_{\rm dec}( \vec{z} )=\mtx{W}^{T}\vec{z}~$  for $\vec{z}\in\mathbb{R}^m$. Accordingly, the entire autoencoder can be explicitly formulated as 
\begin{align}
 & f(\vec{x}) = \mtx{W}^{T} \rho(\mtx{W} \vec{x}).
\end{align}
\end{definition}
Definition \ref{definition: 2-layer tied autoencoder} implies that a $2$-layer tied autoencoder has a decoder weight matrix that is constrained to be the transpose of the encoder weight matrix. 
The next theorem is proved in Appendix \ref{Appendix A: Proof Of Theorem 1}.

\begin{theorem}
\label{theorem: 2-layer tied autoencoder is a proximal mapping}
Any $2$-layer tied autoencoder with 
\begin{itemize}
    \item a differentiable and componentwise activation function $\rho$ whose componentwise derivatives are in \([0,1]\)
    \item a weight matrix $\mtx{W}$ whose singular values are in $[0,1]$
\end{itemize}
is a Moreau proximity operator. 
\end{theorem}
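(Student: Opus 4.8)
The plan is to verify the classical characterization of Moreau proximity operators (Moreau \cite{moreau1965proximite}; see also \cite{hertrich2021Convolutional,sreehari2016plug}): a map $f:\mathbb{R}^d\to\mathbb{R}^d$ equals the proximity operator of some proper, closed, and convex function if and only if (i) $f$ is nonexpansive (i.e., $1$-Lipschitz on $\mathbb{R}^d$), and (ii) $f=\nabla\psi$ for some convex $\psi:\mathbb{R}^d\to\mathbb{R}$. Hence it suffices to show that $f(\vec{x})=\mtx{W}^{T}\rho(\mtx{W}\vec{x})$ satisfies (i) and (ii) under the two stated hypotheses on $\rho$ and $\mtx{W}$.

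For (ii), I would exploit that $\rho$ acts componentwise. Writing $\rho=(\rho_1,\dots,\rho_m)$ and letting $\varphi_i$ be an antiderivative of $\rho_i$ (well defined, and $C^1$, since differentiability of $\rho$ makes each $\rho_i$ continuous), set $\Phi(\vec{z})=\sum_{i=1}^{m}\varphi_i(z_i)$ so that $\nabla\Phi=\rho$, and define $\psi(\vec{x})\triangleq\Phi(\mtx{W}\vec{x})$. The chain rule gives $\nabla\psi(\vec{x})=\mtx{W}^{T}\nabla\Phi(\mtx{W}\vec{x})=\mtx{W}^{T}\rho(\mtx{W}\vec{x})=f(\vec{x})$. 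Each $\varphi_i$ is convex because $\varphi_i'=\rho_i$ is nondecreasing (its derivative lies in $[0,1]\subseteq\mathbb{R}_{\ge 0}$), so $\Phi$ is convex and $\psi$, being $\Phi$ precomposed with the linear map $\mtx{W}$, is convex as well; since $\psi$ is finite-valued and continuous on all of $\mathbb{R}^d$ it is proper, closed, and convex. This establishes (ii).

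For (i), $f$ is continuously differentiable with Jacobian $Df(\vec{x})=\mtx{W}^{T}\mtx{D}(\vec{x})\,\mtx{W}$, where $\mtx{D}(\vec{x})=\mathrm{diag}\!\big(\rho_1'((\mtx{W}\vec{x})_1),\dots,\rho_m'((\mtx{W}\vec{x})_m)\big)$ is diagonal with entries in $[0,1]$, hence $\vec{0}\preceq\mtx{D}(\vec{x})\preceq\mtx{I}_m$. Then $Df(\vec{x})$ is symmetric positive semidefinite and $\vec{0}\preceq\mtx{W}^{T}\mtx{D}(\vec{x})\mtx{W}\preceq\mtx{W}^{T}\mtx{W}$; since the singular values of $\mtx{W}$ lie in $[0,1]$ we have $\mtx{W}^{T}\mtx{W}\preceq\mtx{I}_d$, so the spectral norm of $Df(\vec{x})$ is at most $1$ for every $\vec{x}\in\mathbb{R}^d$. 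By the mean value inequality for vector-valued maps, $f$ is then $1$-Lipschitz, establishing (i). Combining (i) and (ii) with the characterization above shows that $f$ is a Moreau proximity operator.

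Optionally, one can exhibit the induced regularizer explicitly: since $\nabla\psi=f$ is $1$-Lipschitz, the Fenchel conjugate $\psi^{*}$ is $1$-strongly convex, hence $g\triangleq\psi^{*}-\frac{1}{2}\left\Vert\cdot\right\Vert_2^2$ is proper, closed, and convex, and a short duality computation (using $\partial\psi^{*}=(\nabla\psi)^{-1}$ and strict convexity of $\psi^{*}$) gives $f=\mathrm{prox}_{g}$. I expect the main obstacle to be not any single computation but the regularity bookkeeping around Moreau's characterization—checking that $\psi$ is genuinely $C^1$, finite-valued, and convex so the theorem applies verbatim—together with handling the positive-semidefinite ordering $Df(\vec{x})\preceq\mtx{W}^{T}\mtx{W}\preceq\mtx{I}_d$ in the usual case $m<d$, where $\mtx{W}^{T}\mtx{W}$ is rank-deficient and one must reason via eigenvalues rather than invertibility.
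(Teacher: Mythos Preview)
Your proof is correct and rests on the same Jacobian computation as the paper, $Df(\vec{x})=\mtx{W}^{T}\mtx{D}(\vec{x})\mtx{W}$ with $\mtx{D}(\vec{x})$ diagonal in $[0,1]$, followed by the bound $\vec{0}\preceq Df(\vec{x})\preceq\mtx{I}_d$. The route differs in how the characterization of proximity operators is invoked: the paper treats ``symmetric Jacobian with eigenvalues in $[0,1]$'' as a black-box sufficient condition (citing \cite{sreehari2016plug}) and bounds the eigenvalues via submultiplicativity of singular values, $\sigma_i(\mtx{W}^{T}\mtx{D}\mtx{W})\le\sigma_1(\mtx{W}^{T})\sigma_1(\mtx{D})\sigma_i(\mtx{W})$, followed by a separate positive-semidefiniteness lemma. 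You instead verify Moreau's two conditions directly and constructively: you exhibit the potential $\psi(\vec{x})=\Phi(\mtx{W}\vec{x})$ with $\nabla\psi=f$ (so no appeal to an integrability/Poincar\'e-type step is needed), and you bound the Jacobian via the Loewner ordering $\mtx{W}^{T}\mtx{D}\mtx{W}\preceq\mtx{W}^{T}\mtx{W}\preceq\mtx{I}_d$, which handles positivity and the norm bound in one stroke. Your version is more self-contained and even yields the induced regularizer $g=\psi^{*}-\tfrac{1}{2}\Ltwonorm{\cdot}$ explicitly; the paper's version is shorter but leans on the cited criterion.
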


\begin{corollary}
  If $f$ is a $2$-layer tied autoencoder that satisfies the conditions of Theorem \ref{theorem: 2-layer tied autoencoder is a proximal mapping}, there exists $s_f$ such that the optimization (\ref{eq: algorithm development - method of multipliers - 2}) corresponds to the application of $f$.
\end{corollary}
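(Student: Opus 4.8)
The corollary follows from Theorem~\ref{theorem: 2-layer tied autoencoder is a proximal mapping} by a simple rescaling, so the plan is short. Recall that ``$f$ is a Moreau proximity operator'' means there exists a proper, closed, convex function $g:\mathbb{R}^d\rightarrow\mathbb{R}\cup\{\infty\}$ with $f(\vec{z})=\mathrm{prox}_g(\vec{z})\triangleq\argmin_{\vec{v}\in\mathbb{R}^d}\left( g(\vec{v})+\tfrac{1}{2}\Ltwonorm{\vec{v}-\vec{z}}\right)$ for every $\vec{z}\in\mathbb{R}^d$. Since $f$ satisfies the hypotheses of Theorem~\ref{theorem: 2-layer tied autoencoder is a proximal mapping}, such a $g$ exists, and $\mathrm{prox}_g$ is single-valued and everywhere-defined because the minimized objective is strongly convex for every $\vec{z}$. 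First I would rewrite the optimization in (\ref{eq: algorithm development - method of multipliers - 2}) as a scaled proximity operator: dividing its objective by $\gamma>0$ does not change the minimizer, so for any candidate regularizer $s_f$ we have $\widehat{\vec{v}}^{(k)}=\mathrm{prox}_{(\lambda_1/\gamma)\,s_f}\!\big(\widetilde{\vecgreek{\xi}}^{(k)}\big)$.

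Next I would choose $s_f\triangleq(\gamma/\lambda_1)\,g$. With this choice $(\lambda_1/\gamma)\,s_f=g$, hence $\widehat{\vec{v}}^{(k)}=\mathrm{prox}_g\!\big(\widetilde{\vecgreek{\xi}}^{(k)}\big)=f\!\big(\widetilde{\vecgreek{\xi}}^{(k)}\big)$, i.e., solving (\ref{eq: algorithm development - method of multipliers - 2}) reduces to applying the autoencoder $f$ to the current iterate $\widetilde{\vecgreek{\xi}}^{(k)}$ — which is exactly the claimed correspondence. I would also record that $s_f$ is proper, closed, and convex since multiplication by the positive constant $\gamma/\lambda_1$ preserves these properties, so $s_f$ is a legitimate regularizer of the type appearing in (\ref{eq: inverse problem with unknown H}).

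The only loose end is cosmetic: the regularizer in (\ref{eq: inverse problem with unknown H}) was introduced with nonnegative range, whereas the $g$ produced by Theorem~\ref{theorem: 2-layer tied autoencoder is a proximal mapping} need not be nonnegative. If $g$ is bounded below, one simply replaces $s_f$ by $s_f-\inf_{\vec{v}}s_f$, which alters neither the minimizer of (\ref{eq: algorithm development - method of multipliers - 2}) nor the correspondence with $f$; otherwise one retains the relaxed range $\mathbb{R}\cup\{\infty\}$ already used in the discussion preceding Theorem~\ref{theorem: 2-layer tied autoencoder is a proximal mapping}. I do not anticipate any real obstacle here: all the substance is carried by Theorem~\ref{theorem: 2-layer tied autoencoder is a proximal mapping}, and the only thing to get right is the bookkeeping of the constants $\lambda_1$ and $\gamma$ in the scaling $s_f=(\gamma/\lambda_1)\,g$.
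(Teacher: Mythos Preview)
Your proposal is correct and matches the paper's intent: the paper states the corollary without proof, treating it as an immediate consequence of Theorem~\ref{theorem: 2-layer tied autoencoder is a proximal mapping} together with the observation (stated just before the theorem) that for proper, closed, convex $s_f$ the optimization (\ref{eq: algorithm development - method of multipliers - 2}) is a Moreau proximity operator. Your rescaling $s_f=(\gamma/\lambda_1)\,g$ is exactly the bookkeeping the paper leaves implicit, so there is nothing to add or correct.
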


Now, after we showed that some autoencoders correspond to optimization (\ref{eq: algorithm development - method of multipliers - 2}), we return to the practical setting where the given autoencoder $f$ can have various architectures. 
Similar to the plug-and-play priors approach \cite{venkatakrishnan2013plug,sreehari2016plug} and its many applications, e.g., \cite{dar2016postprocessing,rond2016poisson,brifman2016turning,chan2017plug,kamilov2017plug}), we suggest to replace the optimization in (\ref{eq: algorithm development - method of multipliers - 2}) by the application of the trained autoencoder on $\widetilde{\vecgreek{\xi}}^{(k)}$, namely, 
\begin{equation}
\label{eq:application of any autoencoder}
    \widehat{\vec{v}}^{(k)} = f\left({\widetilde{\vecgreek{\xi}}^{(k)} }\right).   
\end{equation}
Although (\ref{eq:application of any autoencoder}) does not necessarily accurately correspond to (\ref{eq: algorithm development - method of multipliers - 2}) for any autoencoder $f$, our experiments in Section \ref{sec:Experimental Results} support this approach by showing the great empirical performance.

\subsubsection{Addressing the Pixel Erasure Degradation in (\ref{eq: algorithm development - method of multipliers - 1}).}
\label{subsec:A Closed Form Solution to the simple optimization in the ADMM}
Let us examine the solution of optimization (\ref{eq: algorithm development - method of multipliers - 1}) in the ADMM procedure. This optimization does not involve the autoencoder $f$ nor its $s_f$, but it involves the (estimate of the) degradation operator.
The degradation operator is represented by a $d \times d$ diagonal matrix $\mtx{\Theta}\triangleq \widehat{\mtx{H}}^{(t-1)}$ whose main diagonal values are either 0 or 1, indicating the positions of missing or available pixels, respectively. When $\mtx{\Theta}$ is multiplied with $\vec{x}$, the resulting vector $\mtx{\Theta}\vec{x}$ has its $i^{\rm th}$ component either erased (set to 0) or unchanged based on the value of $\mtx{\Theta}_{i,i}$. This structure of $\mtx{\Theta}$ allows us to get the following simple, closed-form solution to the optimization in (\ref{eq: algorithm development - method of multipliers - 1}): 
\begin{align}
\label{eq: close form of (13)}
\widehat{\xi}^{(k)}_i =
\begin{cases}
    \frac{y_i+\frac{\gamma}{2} \widetilde{v}^{(k)}_i}{1+\frac{\gamma}{2}} & \text{if }~~ \mtx{\Theta}_{i,i}=1 \\
    \widetilde{v}^{(k)}_i & \text{if }~~ \mtx{\Theta}_{i,i}=0
\end{cases}
\end{align}
where $\widehat{\xi}^{(k)}_i$, $y_i$, $\widetilde{v}^{(k)}_i$ are the $i^{\rm th}$ components of the vectors $\widehat{\vecgreek{\xi}}^{(k)}$, $\vec{y}$, $\widetilde{\vec{v}}^{(k)}$, respectively. More details on the development of (\ref{eq: close form of (13)}) are provided in Appendix \ref{Appendix: Proof of equation (close form of (13))}.

The ADMM-based solution to  (\ref{eq: alternating minimization - optimize x for fixed H}), as developed in Section \ref{subsec: the ADMM inner iteration}, is summarized in Algorithm \ref{Algorithm:ADMM-based Solution to Optimization}. 

\begin{algorithm}
	\caption{ADMM-based Solution to Optimization (\ref{eq: alternating minimization - optimize x for fixed H}) for a Given Degradation Operator Estimate}
	\label{Algorithm:ADMM-based Solution to Optimization}
	\begin{algorithmic}[1] 
		\State Inputs: trained autoencoder $f$, degraded training sample $ \vec{y} $, degradation operator estimate $\mtx{\Theta}$, $ \gamma $. Recall the notations of (\ref{eq:simplifying notations for ADMM}).
		\State  Initialize $k = 0,  {\hat{\vec{v}}}^{(0)} = \vec{0}, \vec{u}^{(1)} = \vec{0}$.
		\Repeat 
		
		\State $ k \gets k + 1$
		
		\State $\widetilde{\vec{v}}^{(k)}\triangleq \widehat{\vec{v}}^{(k-1)} - \vec{u}^{(k)}$
		
		\State $ \widehat{\vecgreek{\xi}}^{(k)} = \argmin_{\vec{x}\in\mathbb{R}^d} \Ltwonorm{ \mtx{\Theta}\vec{x} - \vec{y}} + \frac{\gamma}{2} \Ltwonorm{ \vec{x} - \widetilde{\vec{v}}^{(k)}} $ ~~~~(see the closed form solution in (\ref{eq: close form of (13)}))
		
		\State $\widetilde{\vecgreek{\xi}}^{(k)}\triangleq \widehat{\vecgreek{\xi}}^{(k)} + \vec{u}^{(k)}$
  
		\State $\widehat{\vec{v}}^{(k)} = f\left(\widetilde{\vecgreek{\xi}}^{(k)}\right)$
		
		\State $\vec{u}^{(k+1)} = \vec{u}^{(k)} + \left( \hat{ \vecgreek{\xi} }^{(k)} - \hat{\vec{v}}^{(k)} \right)$
		
		\Until{stopping criterion is satisfied}
		\State Output: The training sample estimate $\widehat{\vecgreek{\xi}}^{(k)}$ from the last iteration, this should be used as the solution $\widehat{\vec{x}}^{(t)}$ to the optimization in (\ref{eq: alternating minimization - optimize x for fixed H}).
	\end{algorithmic}
\end{algorithm}

\subsection{Estimation of a degradation operator for a given training sample estimate.}
\label{subsec:how to address the second optimization in alternating minimization}
Recall that our recovery problem has the alternating minimization form of (\ref{eq: alternating minimization - optimize x for fixed H})-(\ref{eq: alternating minimization - optimize H for fixed x}). In Section \ref{subsec: the ADMM inner iteration} we showed how to address (\ref{eq: alternating minimization - optimize x for fixed H}) via ADMM, which by itself is an iterative procedure (see Algorithm \ref{Algorithm:ADMM-based Solution to Optimization}). 
Now we turn to address the second optimization in the alternating minimization, namely, the estimation of the degradation operator in (\ref{eq: alternating minimization - optimize H for fixed x}). 

\begin{figure*}[t]
    \centering
    \includegraphics[width=\textwidth]{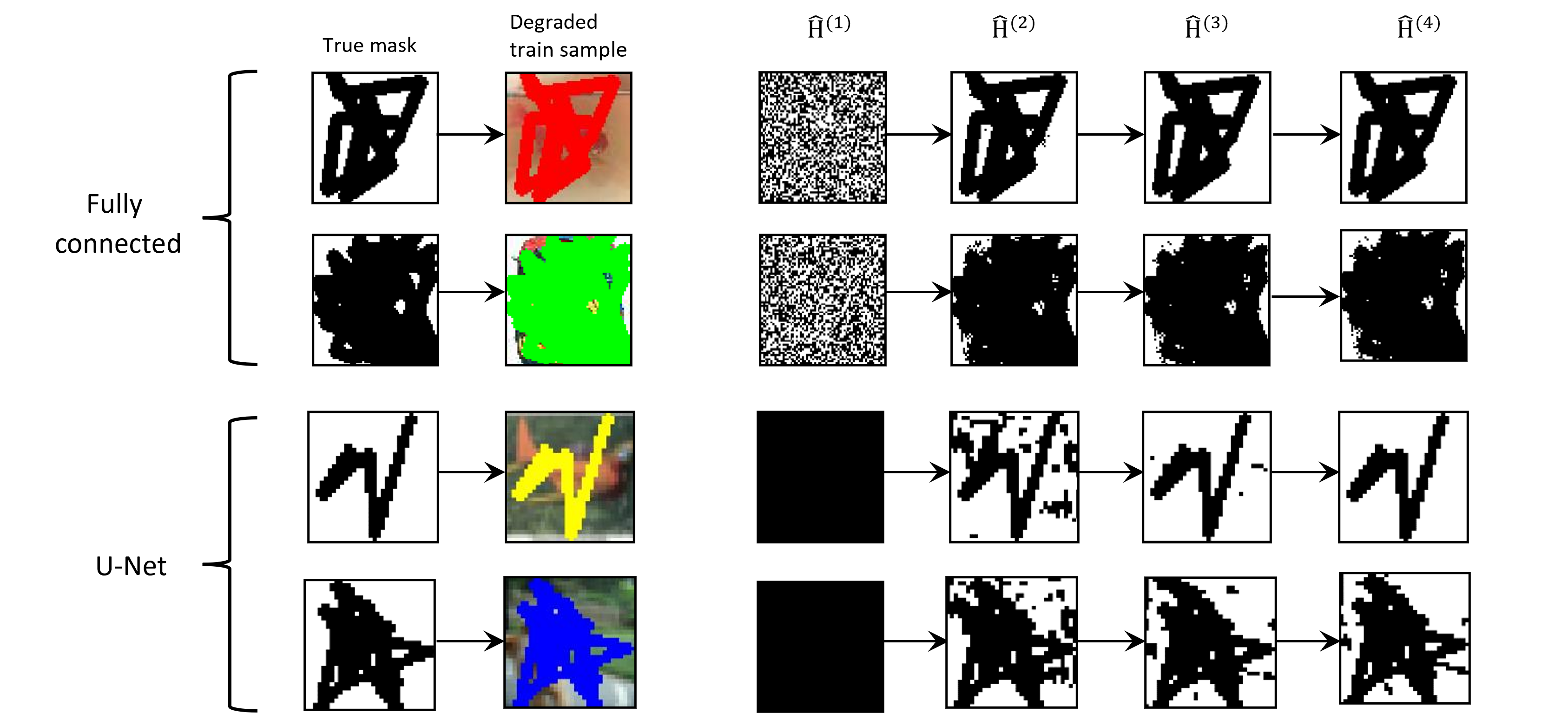}
    \caption{The estimation of $\widehat{\mtx{H}}$ along the iterations. $\widehat{\mtx{H}}^{(t)}$ is the estimate of $\mtx{H}$ at the $t^{\rm th}$ iteration of the proposed recovery algorithm via alternating minimization (\ref{eq: alternating minimization - optimize x for fixed H})-(\ref{eq: alternating minimization - optimize H for fixed x}). Here, the diagonal matrix $\widehat{\mtx{H}}^{(t)}$ is shown as an image.}
    \label{fig:h_estimation}
\end{figure*}

In this paper, we consider degradations that erase many pixels from the original images used for training the autoencoder $f$. Therefore, the degradation operator $\mtx{H}$ is a $d\times d$ diagonal matrix whose main diagonal includes zeros and ones only. 
We assume that this general structure is known and can be used in the recovery procedure, although without knowing the number and pattern of missing pixels. 

According to the assumption on the pixel erasure degradation, we define the regularizer $\phi$ as the indicator function 
\begin{equation}
    \label{eq:regularizer phi - indicator function for a pixel erasure degradation}
    \phi(\widetilde{\mtx{H}}) = 
\begin{cases}
    0 & \text{if } \widetilde{\mtx{H}} \text{ is a diagonal matrix with } \widetilde{\mtx{H}}_{i,i}\in\{0,1\} \\ & \text{ for } i=1,\dots,d
\\
    \infty & \text{otherwise.}
\end{cases}
\end{equation}
This indicator function of $\phi$ forces the estimate $\widehat{\mtx{H}}^{(t)}$ in (\ref{eq: alternating minimization - optimize H for fixed x}) to be a diagonal matrix with zeros and ones on its main diagonal. Note that this indicator function implies that the specific positive value of $\lambda_2$ does not matter. 

Using the definition of $\phi$ in (\ref{eq:regularizer phi - indicator function for a pixel erasure degradation}), we can write the optimization (\ref{eq: alternating minimization - optimize H for fixed x}) as $\widehat{\mtx{H}}^{(t)}={\rm diag}\{ \widehat{\mtx{H}}_{1,1}^{(t)}, \dots, \widehat{\mtx{H}}_{d,d}^{(t)}\}$ where 
\begin{align}
\label{eq: optimize H for fixed x with indicator function phi}
    & \Big\{\widehat{\mtx{H}}_{i,i}^{(t)}\Big\}_{i=1}^{d} = \underset{ \{\widetilde{\mtx{H}}_{i,i}\}_{i=1}^{d} \in \{0,1\} }{\arg\min} \displaystyle\sum_{~i:~ \widetilde{\mtx{H}}_{i,i} = 1} \left(\widehat{x}_i^{(t)} - y_i\right)^2 + \displaystyle\sum_{i:~\widetilde{\mtx{H}}_{i,i} = 0} y_i^2. 
\end{align}
Here $\widehat{x}_i^{(t)}$ and $y_i$ are the $i^{\rm th}$ components ($i\in\{1,\dots,d\}$) of the vectors $\widehat{\vec{x}}^{(t)}$ and $\vec{y}$, respectively, and $\widetilde{\mtx{H}}_{i,i}$ is the $i^{\rm th}$ component on the main diagonal of $\widetilde{\mtx{H}}$. Then, under the assumption that $y_i\in[0,1]$ for any $i$, the minimization in (\ref{eq: optimize H for fixed x with indicator function phi}) can be solved by the diagonal matrix $\widehat{\mtx{H}}^{(t)}$ whose $i^{\rm th}$ component on the main diagonal is 
\begin{align}
\label{eq: optimize H for fixed x with indicator function phi - solution}
\widehat{\mtx{H}}^{(t)}_{i, i} = 
\begin{cases}
    0 & \text{if } \widehat{x}_i^{(t)} > 2{y}_i \text{ or } \widehat{x}_i^{(t)} < 0\\
    1 & \text{otherwise}.
\end{cases}
\end{align}

The recovery procedure requires the initialization of $\widetilde{\mtx{H}}^{(0)}$. For U-Net autoencoders, we set $\widetilde{\mtx{H}}^{(0)}=\mtx{0}$; otherwise, we set $\widetilde{\mtx{H}}^{(0)}= \mtx{H}_{\rm rand}$ where $ \mtx{H}_{\rm rand}$ is a diagonal matrix with main diagonal components that are 0 or 1, each of these values independently drawn with $50\%$ chance. 
We determined the mask initialization form by training models on a validation set and evaluating the validation recovery performance. 
Examples of the estimation of the degradation operator are provided in Figure \ref{fig:h_estimation}.

At this point, we conclude the development of the iterative recovery algorithm via alternating minimization in (\ref{eq: alternating minimization - optimize x for fixed H})-(\ref{eq: alternating minimization - optimize H for fixed x}): The optimization (\ref{eq: alternating minimization - optimize x for fixed H}) is addressed via ADMM as in Algorithm \ref{Algorithm:ADMM-based Solution to Optimization}, and optimization (\ref{eq: alternating minimization - optimize H for fixed x})
is solved by (\ref{eq: optimize H for fixed x with indicator function phi - solution}). The stopping criteria for the iterations in our method is that the MSE between successive estimates, $\widehat{\vec{x}}^{(t)}$, is below a threshold of $10^{-9}$ for three consecutive iterations. More information can be found in Appendix \ref{appendix:sec:Stopping Criterion of the Proposed Method}. 

\section{Experimental Results}
\label{sec:Experimental Results}
In our first set of experiments, we examined two main autoencoder architectures: (i) a 10-layer fully connected (FC) network, which was examined for LReLU and PReLU activations; and (ii) a U-Net model \cite{ronneberger2015u} that was examined for LReLU, PReLU, and SoftPlus activations. The FC model was trained on a random balanced subset of 600 images across all classes from Tiny ImageNet. The U-Net was trained on 50 random samples from the SVHN dataset. We trained the models up to perfectly fit the data at an MSE train loss of $10^{-8}$. We also saved model checkpoints during training at higher train losses to evaluate the recovery ability at lower overfitting levels. For more details on the model architecture and recovery method implementation, see Appendices \ref{Appendix B: Model Architectures} and \ref{appendix:sec:The Proposed Method: Additional Implementation Details}, respectively. 
Moreover, note that later in this section we will also describe a second set of experiments where the datasets are larger (up to 25,000 images) and the models less overfit the training data. 

\begin{figure*}[t]
    \centering
    \includegraphics[width=0.98\textwidth]{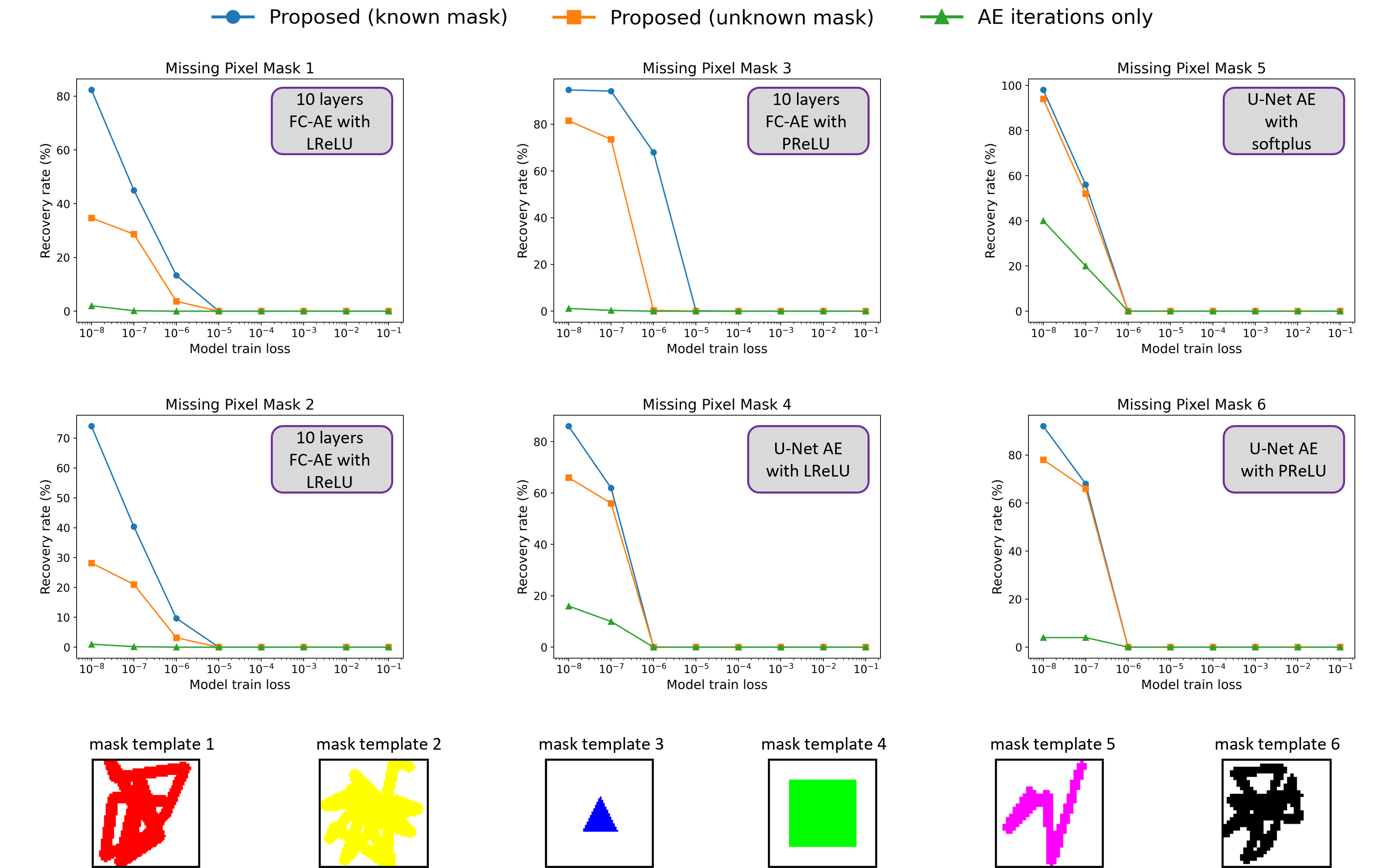}
    \caption{\textbf{Accurate recovery rates} for recovery from degradation due to various missing pixel masks, tested on different architectures. The evaluated recovery methods are the proposed method for the unknown mask (orange curves), the proposed method for a known mask (blue curves), the simple iterations of the autoencoder only (green curves), and the generic inpainting method DDNM that got 0\% accurate-recovery rate for all these settings and therefore is not graphically shown.}
    \label{fig:hard_threshold}
\end{figure*}

\paragraph{\textbf{Practical definitions of successful recovery and evaluation metrics.}}
We examine the recovery performance for the trained models based on the following three metrics: 
\begin{enumerate}[label=(\roman*)]
    \item \textit{Accurate-recovery rate}: the percentage of training samples for which the MSE between the original and reconstructed images is less than $10^{-7}$ (i.e., the PSNR\footnote{In our case where the original image pixel values are in $[0,1]$:~~ $PSNR=10\log_{10}\left({\frac{1}{MSE}}\right)$} is higher than 70dB).
    \item \textit{Approximate-recovery rate}: the percentage of training samples for which the reconstruction MSE is less than $5 \cdot 10^{-4}$ (i.e., PSNR>33.01dB).
    \item \textit{Average PSNR} of all the reconstructions of the training data.
\end{enumerate}

\paragraph{\textbf{The evaluated recovery methods.}}
We evaluated our proposed method in two forms:
\begin{enumerate}[label=(\alph*)]
    \item \textit{Without} knowing the true missing pixel mask $\mtx{H}$. This is the interesting form of our method, as the methods from the previous works on autoencoder memorization also do not use the true missing pixel mask.  
    \item \textit{With} knowing the true missing pixel mask $\mtx{H}$. We evaluate this form of our method only as a reference that helps understanding the effect of not knowing the true missing pixel mask in our method.  See details in Appendix \ref{appendix:sec:The Special Case of the Proposed method for a Known H}.
\end{enumerate}
We compared our method to two alternatives: 
\begin{enumerate}[label=(\alph*),start=3]
\item The simple iterative application of the trained autoencoder (AE) only, as was analyzed and shown in \cite{radhakrishnan2020overparameterized} to have good recovery performance in very specific settings. 
\item A generic image inpainting method that utilizes the true mask of missing pixels. For this, we use the Denoising Diffusion Null-space Method (DDNM) \cite{wang2022zero}, which is a relatively recent technique with good inpainting performance also for ImageNet images. Since we consider the DDNM as a generic method, its own training does not use the specific training samples of the autoencoder. More implementation details and visual examples are available in Appendix \ref{appendix:DDNM}.
\end{enumerate}
Note that our proposed method form (b) and alternative (d) have the advantage of knowing the true pixel mask, which is unavailable for our proposed method form (a) and alternative (c).

\paragraph{\textbf{Recovery performance on \textit{training} data.}}
Our method outperformed the two alternatives in recovery rate for both the accurate and approximate definitions. For accurate recovery, Figure \ref{fig:hard_threshold} shows that our method outperformed the alternatives, e.g., for U-Net autoencoder at train loss of $10^{-8}$ and missing pixel mask 6, our proposed method (with unknown $\mtx{H}$) achieves $78\%$ recovery, versus $4\%$ for the simple iterative autoencoding and $0\%$ for DDNM generic inpainting. Interestingly, sometimes our method succeeds to recover from models at higher train loss values (lower overfitting) where competing methods (b)-(c) have $0\%$ recovery rate (e.g., see results for masks 1,2,3 at train loss $10^{-7}$ in Figure \ref{fig:hard_threshold}). Yet, the results clearly show that a sufficiently high train loss (i.e., a sufficiently low or no overfitting) can prevent the recovery from all the examined methods.   More evaluations are provided in Appendix \ref{Appendix C: Additional Experiments} (specifically, the approximate-recovery rate and average PSNR evaluations appear in Figures \ref{fig:results} and \ref{fig:psnr_threshold}). Unless otherwise specified, we consider pixel erasure without additive noise (i.e., $\sigma_{\epsilon}=0$ in the degradation model (\ref{eq: deteriorated training sample})). Results for degradation with noise are provided in Appendix \ref{appendix:sec:Experiments with additive noise}.

\paragraph{\textbf{Recovery performance on \textit{non-training} data.}} 
To validate that our method's effectiveness stems from the autoencoder's induced regularization for its specific training examples, we created Tiny ImageNet and SVHN non-training sets that include images that were not used for training our autoencoders. The sizes of the non-training sets are the same as the sizes of the corresponding training sets. The most overfitted models (MSE of $10^{-8}$) were examined. We applied our method to recover non-training data from pixel erasure according to the same masks as in Figure \ref{fig:hard_threshold}.
For the non-training data, our method had $0\%$ accurate recovery rate across all architectures and masks, and $0\%$ approximate recovery rate for all except Mask 3 with the FC model which had 2/600 recovered images.
These results confirm the expected behavior: Our method is indeed good for recovery of the specific training samples of a given autoencoder, and not to generically recover the missing pixels from images that were not used in the training. 
This behavior is completely acceptable, as our method's single goal is to recover images from the specific training dataset of the given autoencoder in order to examine training data memorization. 

\paragraph{\textbf{Experiments on larger datasets and moderate overfitting.}}
To evaluate our method in the regime of moderate overfitting, we trained autoencoders on larger datasets and up to MSE training loss of $10^{-4}$ (which is higher than the perfect fitting loss of $10^{-8}$ in our experiments described above). The two examined architectures in this experiment are: 
\begin{enumerate}[label=(\roman*)]
\item A 20-layer FC autoencoder with Leaky ReLU activations, trained on a subset of 25,000 images from Tiny ImageNet. 
\item A U-Net autoencoder with Softplus activations, trained on 1000 CIFAR-10 images. 
\end{enumerate}

Tables \ref{table: large dataset - FC}-\ref{table: large dataset - U-Net} show that our method achieved higher approximate-recovery performance compared to the simple iterative autoencoder application even when using larger scale datasets and models trained to higher losses. This further establishes the significantly improved recovery that our method achieves while maintaining the ability to recover training images at scale.



\newcolumntype{M}[1]{>{\centering\arraybackslash}m{#1}}

\begin{table}
\centering
\begin{minipage}[b]{0.45\textwidth}
 \centering
  \footnotesize
  \begin{tabular}{|M{2.8cm}|c|c|c|}
  \hline
  \vspace{20pt}  Missing pixel mask  & \raisebox{-0.4\height}{\includegraphics[scale=0.47]{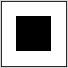}} & \raisebox{-0.4\height}{\includegraphics[scale=0.47]{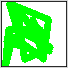}} & \raisebox{-0.4\height}{\includegraphics[scale=0.47]{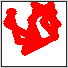}}\\
  \hline
  AE iterations only & 7.64\% & 0.78\% & 1.92\% \\
  \hline
  Proposed method\newline(unknown H) & 54.2\% & 45.2\% & 56.9\% \\
  \hline
  Proposed method\newline(known H) & 80.6\% & 75.9\% & 84.2\% \\
  \hline
  \end{tabular}
  \caption{Approximate-recovery rates for 20-layer FC (25k Tiny-ImageNet training samples)}
  \label{table: large dataset - FC}
\end{minipage}%
\hfill 
\begin{minipage}[b]{0.45\textwidth}
  \centering
  \footnotesize
  \begin{tabular}{|M{2.8cm}|c|c|c|}
  \hline
  \vspace{20pt} Missing pixel mask & \raisebox{-0.4\height}
  {\includegraphics[scale=0.4]{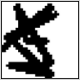}} & \raisebox{-0.4\height}
  {\includegraphics[scale=0.4]{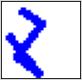}} & \raisebox{-0.4\height}
  {\includegraphics[scale=0.4]{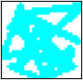}}\\
  \hline
  AE iterations only & 0.6\% & 10.2\% & 1.2\% \\
  \hline
  {Proposed method\newline(unknown H)} & 50.3\% & 75.9\% & 22.9\% \\
  \hline
  {Proposed method\newline(known H)} & 59.3\% & 86.2\% & 28.8\% \\
  \hline
  \end{tabular}
  \caption{Approximate-recovery rates for U-Net (1000 CIFAR-10 training samples)}
    \label{table: large dataset - U-Net}
\end{minipage}
\end{table}

\section{Conclusion}
\label{sec:Conclusion}

In this work, we have established an inverse-problem perspective on memorization evaluation via recovery of training data from overparameterized autoencoders. We defined an inverse problem with a regularizer that is implicitly defined based on the trained autoencoder. We addressed this problem using the ADMM optimization technique and an extension of the plug and play priors idea, and developed a practical iterative method for the recovery of training data based on applications of the trained autoencoder.

Our new approach significantly improves the training data recovery performance, compared to previous methods. We showed for a variety of settings that our method is able to recover up to tens of percents more of the training dataset, and to operate well also on models with moderate overfitting (unlike previous methods that require a stricter overfitting, or perfect fitting). Therefore, our method greatly improves the ability to empirically evaluate the extent of memorization in autoencoders.

More generally, we believe that our work provides useful study cases and insights that can pave the way for a better understanding of overparameterization and memorization phenomena in autoencoders.

\section*{Acknowledgements}
This work was supported by the the Lynn and William Frankel Center for Computer Science at Ben-Gurion University, and by the Israeli Council for Higher Education (CHE) via the Data Science Research Center, Ben-Gurion University of the Negev.

\small
\bibliographystyle{splncs04}



\appendix

\renewcommand\thefigure{\thesection.\arabic{figure}}    
\setcounter{figure}{0}  

\renewcommand\thetable{\thesection.\arabic{table}}    
\setcounter{table}{0} 

\renewcommand{\thealgorithm}{\thesection.\arabic{algorithm}} 
\setcounter{algorithm}{0}  

\renewcommand{\thetheorem}{\thesection.\arabic{theorem}} 
\setcounter{theorem}{1} 

\renewcommand{\thelemma}{\thesection.\arabic{lemma}} 
\setcounter{lemma}{1} 

\renewcommand{\thecorollary}{\thesection.\arabic{corollary}} 
\setcounter{corollary}{1} 

\counterwithin*{figure}{section}
\counterwithin*{table}{section}
\counterwithin*{algorithm}{section}
\counterwithin*{theorem}{section}
\counterwithin*{lemma}{section}
\counterwithin*{corollary}{section}

\section*{Appendices}
\section{Proof of Theorem \ref{theorem: 2-layer tied autoencoder is a proximal mapping}}
\label{Appendix A: Proof Of Theorem 1}
In this Appendix, we prove Theorem \ref{theorem: 2-layer tied autoencoder is a proximal mapping}. 
\begin{lemma}
Given a 2-layer tied autoencoder, $f$, which can be formulated as 
\[ f(\vec{x}) = \mtx{W}^{T} \rho(\mtx{W} \vec{x}) \]
for $\vec{x}\in\mathbb{R}^d$, where $\mtx{W} \in \mathbb{R}^{m \times d}$, $\rho: \mathbb{R}^{m}\rightarrow \mathbb{R}^{m}$. 
Specifically, the activation function has a (separable) componentwise form 
\begin{equation}
    \label{eq:componentwise activation - definition}
    \rho(\vec{z})=[\bar{\rho}(z_1),\dots,\bar{\rho}(z_m)]^T ~~~\text{for}~~ \vec{z}\in\mathbb{R}^m
\end{equation}
whose $j^{\rm th}$ component is denoted as $z_j$, and for a scalar activation function $\bar{\rho}:\mathbb{R}\rightarrow\mathbb{R}$. We denote $\vec{z}\triangleq \mtx{W} \vec{x}$.

Then, the Jacobian matrix of $f$ is 
\begin{equation}
\label{eq:1-layer AE - Jacobian - lemma formula}
\frac{df(\vec{x})}{d\vec{x}} =  
\mtx{W}^{T}  {\rm{diag}}\left(\frac{d\bar{\rho}(z_1)}{d z_1}, \frac{d\bar{\rho}(z_2)}{d z_2}, \ldots, \frac{d\bar{\rho}(z_m)}{d z_m }\right)  \mtx{W}.
\end{equation}
where $\text{diag}(\cdot)$ represents a diagonal matrix with the given components along the main diagonal. 
\end{lemma}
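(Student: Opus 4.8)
The plan is to obtain the Jacobian by writing $f$ as a composition of three maps and invoking the chain rule. Decompose $f = \ell_2 \circ \rho \circ \ell_1$, where $\ell_1(\vec{x})=\mtx{W}\vec{x}$ maps $\mathbb{R}^d\to\mathbb{R}^m$, the middle map is the componentwise activation $\rho$ of (\ref{eq:componentwise activation - definition}) acting on $\vec{z}\triangleq\mtx{W}\vec{x}$, and $\ell_2(\vec{u})=\mtx{W}^T\vec{u}$ maps $\mathbb{R}^m\to\mathbb{R}^d$. Since $\ell_1$ and $\ell_2$ are linear, their Jacobians are the constant matrices $\mtx{W}$ and $\mtx{W}^T$, respectively.

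Next, I would compute the Jacobian of $\rho$ at $\vec{z}$. Because $\rho$ is separable — its $i^{\rm th}$ output coordinate $\bar{\rho}(z_i)$ depends only on the $i^{\rm th}$ input coordinate — the $(i,j)$ entry of its Jacobian equals $\partial \bar{\rho}(z_i)/\partial z_j$, which is $\frac{d\bar{\rho}(z_i)}{dz_i}$ when $i=j$ and $0$ otherwise. Hence the Jacobian of $\rho$ at $\vec{z}$ is the diagonal matrix ${\rm diag}\!\left(\frac{d\bar{\rho}(z_1)}{dz_1},\dots,\frac{d\bar{\rho}(z_m)}{dz_m}\right)$; differentiability of $\bar{\rho}$ is used exactly here, to ensure this derivative exists.

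Finally, assembling via the chain rule for vector-valued functions — which multiplies the component Jacobians in the reverse order of composition — gives $\frac{df(\vec{x})}{d\vec{x}} = \mtx{W}^T\,{\rm diag}\!\left(\frac{d\bar{\rho}(z_1)}{dz_1},\dots,\frac{d\bar{\rho}(z_m)}{dz_m}\right)\mtx{W}$, with consistent dimensions $(d\times m)(m\times m)(m\times d)=d\times d$. This is precisely (\ref{eq:1-layer AE - Jacobian - lemma formula}).

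There is no real obstacle here: the statement is a routine chain-rule computation. The only points worth watching are the order of multiplication in the chain rule and the fact that separability of $\rho$ produces a genuinely diagonal (not merely sparse) Jacobian. Note that no assumption on the range of $\bar{\rho}'$ or on the singular values of $\mtx{W}$ is needed for the Lemma — those hypotheses enter only later, in Theorem \ref{theorem: 2-layer tied autoencoder is a proximal mapping}, to turn the autoencoder into a Moreau proximity operator.
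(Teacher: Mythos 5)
Your proof is correct and follows essentially the same route as the paper's: both decompose $f$ into the linear map $\vec{x}\mapsto\mtx{W}\vec{x}$, the componentwise activation, and the linear map $\vec{u}\mapsto\mtx{W}^T\vec{u}$, apply the chain rule, and observe that separability makes the middle Jacobian diagonal. No discrepancies to report.
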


\begin{proof}
Let us define auxiliary variables. In addition to $\vec{z}\triangleq \mtx{W} \vec{x}$, we also define $\vec{a}\triangleq \rho(\vec{z})$ and $\vecgreek{\xi}\triangleq f(\vec{x}) = \mtx{W}^{T} \vec{a}$. 


Then, by the chain rule, we get 
\begin{equation}
\label{eq:1-layer AE - Jacobian - development - 1}
\frac{df(\vec{x})}{d\vec{x}} = \frac{d\vecgreek{\xi}}{d\vec{a}} \cdot \frac{d\vec{a}}{d\vec{z}} \cdot \frac{d\vec{z}}{d\vec{x}} = \mtx{W}^{T} \cdot \frac{d\vec{a}}{d\vec{z}} \cdot \mtx{W}    
\end{equation}

Next, by definition, $\frac{d\vec{a}}{d\vec{z}} = \frac{d\rho(\vec{z})}{d\vec{z}}$, and since $\rho(\vec{z})$ is a vector of componentwise activation functions (see (\ref{eq:componentwise activation - definition})), this Jacobian is a $m\times m$ diagonal matrix in the form of 
\begin{equation}
\label{eq:1-layer AE - Jacobian - development - 2}
\frac{d\rho(\vec{z})}{d\vec{z}} = 
 {\rm{diag}}\left(\frac{d\bar{\rho}(z_1)}{d z_1}, \frac{d\bar{\rho}(z_2)}{d z_2}, \ldots, \frac{d\bar{\rho}(z_m)}{d z_m }\right) 
\end{equation}
Substituting (\ref{eq:1-layer AE - Jacobian - development - 2}) back into (\ref{eq:1-layer AE - Jacobian - development - 1}) gives the Jacobian formula of (\ref{eq:1-layer AE - Jacobian - lemma formula}). 
\end{proof}

\begin{corollary}
\label{corollary: Jacobian form}
Let $\bar{\rho}:\mathbb{R}\rightarrow\mathbb{R}$ be a scalar activation function that is differentiable and has derivatives in $[0,1]$, namely, $\frac{d \bar{\rho}(z)}{dz} \in [0,1]$ for any $z\in\mathbb{R}$. 
Then, for such activation function, a $2$-layer tied autoencoder has a Jacobian in the form of  $\mtx{W}^{T} \mtx{D} \mtx{W}$, where $\mtx{D}$ is a diagonal matrix whose values are in $[0,1]$.
\end{corollary}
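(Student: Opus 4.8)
The plan is to read the conclusion straight off the Jacobian formula established in the preceding Lemma, after substituting the hypothesis on $\bar{\rho}$. Concretely, the Lemma gives, for the $2$-layer tied autoencoder $f(\vec{x})=\mtx{W}^{T}\rho(\mtx{W}\vec{x})$,
\[
\frac{df(\vec{x})}{d\vec{x}}=\mtx{W}^{T}\,\mathrm{diag}\!\left(\frac{d\bar{\rho}(z_1)}{dz_1},\dots,\frac{d\bar{\rho}(z_m)}{dz_m}\right)\mtx{W},\qquad \vec{z}\triangleq\mtx{W}\vec{x}.
\]
So I would simply define $\mtx{D}\triangleq\mathrm{diag}\!\left(\frac{d\bar{\rho}(z_1)}{dz_1},\dots,\frac{d\bar{\rho}(z_m)}{dz_m}\right)$, which is diagonal by construction, so that $\frac{df(\vec{x})}{d\vec{x}}=\mtx{W}^{T}\mtx{D}\mtx{W}$, exactly the asserted form. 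It then remains only to locate the diagonal entries of $\mtx{D}$ in $[0,1]$: the $j^{\rm th}$ entry is $\frac{d\bar{\rho}(z)}{dz}$ evaluated at the real number $z=z_j=(\mtx{W}\vec{x})_j$, and by hypothesis $\frac{d\bar{\rho}(z)}{dz}\in[0,1]$ for every $z\in\mathbb{R}$; hence every entry of $\mtx{D}$ lies in $[0,1]$, which completes the argument.

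There is essentially no obstacle here — the statement is an immediate corollary of the Lemma once the activation-derivative hypothesis is plugged in, and no estimates beyond this substitution are needed. The only point I would state explicitly is that $\mtx{D}$ (and hence the Jacobian) depends on the point $\vec{x}$ through $\vec{z}=\mtx{W}\vec{x}$, but the claim is pointwise: for each fixed $\vec{x}$ the Jacobian has the form $\mtx{W}^{T}\mtx{D}\mtx{W}$ with a diagonal $\mtx{D}$ whose entries lie in $[0,1]$, and this entrywise bound is uniform in $\vec{x}$ because it only invokes the global range condition on $\frac{d\bar{\rho}}{dz}$. This uniformity is precisely what makes the corollary useful downstream, where $\mtx{W}^{T}\mtx{D}\mtx{W}$ is shown (via its symmetry and the singular-value assumption on $\mtx{W}$) to have eigenvalues in $[0,1]$, yielding Theorem~\ref{theorem: 2-layer tied autoencoder is a proximal mapping}.
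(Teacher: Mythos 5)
Your proposal is correct and matches the paper's reasoning: the corollary is read off directly from the Jacobian formula in the preceding Lemma by taking $\mtx{D}$ to be the diagonal matrix of activation derivatives and invoking the hypothesis that these lie in $[0,1]$. The paper treats this as immediate and gives no separate proof, so your argument is exactly the intended one.
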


\begin{lemma}
\label{lemma: symmetric positive semi-definite}
Let $\mtx{W}\in\mathbb{R}^{c_2\times c_1}$ and $\mtx{D}$ is a $c_2\times c_2$ diagonal matrix with values in $[0,1]$.
Then, $\mtx{W}^T \mtx{D} \mtx{W}$ is a symmetric positive semi-definite matrix.
\end{lemma}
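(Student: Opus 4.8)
The plan is to verify the two defining properties of a symmetric positive semi-definite matrix directly: symmetry of the matrix $\mtx{W}^T \mtx{D} \mtx{W}$, and nonnegativity of the associated quadratic form. Both follow from elementary manipulations, so the proof should be short; the only ``insight'' needed is to notice that among the hypotheses on $\mtx{D}$ (diagonal with entries in $[0,1]$), symmetry uses only that $\mtx{D}$ is diagonal, and positive semi-definiteness uses only that its entries are nonnegative.

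First I would establish symmetry. Since $\mtx{D}$ is diagonal it is symmetric, i.e. $\mtx{D}^T=\mtx{D}$, hence $\left(\mtx{W}^T \mtx{D} \mtx{W}\right)^T = \mtx{W}^T \mtx{D}^T \left(\mtx{W}^T\right)^T = \mtx{W}^T \mtx{D} \mtx{W}$, using $(\mtx{A}\mtx{B}\mtx{C})^T = \mtx{C}^T\mtx{B}^T\mtx{A}^T$. Next I would establish positive semi-definiteness. For an arbitrary $\vec{u}\in\mathbb{R}^{c_1}$, set $\vec{w}\triangleq \mtx{W}\vec{u}\in\mathbb{R}^{c_2}$; then $\vec{u}^T \mtx{W}^T \mtx{D} \mtx{W} \vec{u} = \vec{w}^T \mtx{D} \vec{w} = \sum_{i=1}^{c_2} \mtx{D}_{i,i}\, w_i^2$, which is a sum of nonnegative terms because each $\mtx{D}_{i,i}\in[0,1]\subseteq[0,\infty)$ and each $w_i^2\ge 0$; therefore $\vec{u}^T \mtx{W}^T \mtx{D} \mtx{W} \vec{u}\ge 0$ for all $\vec{u}$, which together with symmetry is exactly the definition of a symmetric positive semi-definite matrix.

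An equivalent, slightly slicker route I would mention as a remark: since $\mtx{D}$ is diagonal with entries in $[0,1]$, in particular nonnegative, it admits the entrywise square root $\mtx{D}^{1/2}\triangleq{\rm diag}(\sqrt{\mtx{D}_{1,1}},\dots,\sqrt{\mtx{D}_{c_2,c_2}})$, which is also diagonal and satisfies $\mtx{D}^{1/2}\mtx{D}^{1/2}=\mtx{D}$ and $(\mtx{D}^{1/2})^T=\mtx{D}^{1/2}$. Then $\mtx{W}^T \mtx{D} \mtx{W} = \mtx{W}^T \mtx{D}^{1/2}\mtx{D}^{1/2} \mtx{W} = \left(\mtx{D}^{1/2}\mtx{W}\right)^T \left(\mtx{D}^{1/2}\mtx{W}\right)$, which is of the form $\mtx{A}^T\mtx{A}$ and hence symmetric positive semi-definite.

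There is no real obstacle here: the statement is a standard linear-algebra fact, and the only thing to be careful about is simply invoking the right hypotheses (diagonality for symmetry, nonnegativity of the diagonal entries for the quadratic form). In the larger context, this lemma will be combined with Corollary \ref{corollary: Jacobian form} to conclude that the Jacobian of a $2$-layer tied autoencoder satisfying the conditions of Theorem \ref{theorem: 2-layer tied autoencoder is a proximal mapping} is symmetric positive semi-definite, which is one of the ingredients for identifying the autoencoder with a Moreau proximity operator.
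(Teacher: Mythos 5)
Your proof is correct and follows essentially the same route as the paper: symmetry from $\mtx{D}^T=\mtx{D}$ and positive semi-definiteness by evaluating the quadratic form on $\vec{w}=\mtx{W}\vec{u}$ as a nonnegative sum $\sum_i \mtx{D}_{i,i} w_i^2$ (you even correctly sum over $c_2$ terms, where the paper's writeup has a minor index typo). The additional $\left(\mtx{D}^{1/2}\mtx{W}\right)^T\left(\mtx{D}^{1/2}\mtx{W}\right)$ factorization is a nice equivalent remark but not a substantively different argument.
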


\begin{proof}
First, we show that the matrix is symmetric: 
\[(\mtx{W}^T \mtx{D} \mtx{W})^T = \mtx{W}^T \mtx{D}^T \mtx{W} = \mtx{W}^T \mtx{D} \mtx{W},\]
where $\mtx{D}^T=\mtx{D}$ due to the symmetry of a diagonal matrix.

Now, we prove that the matrix  $\mtx{W}^T \mtx{D} \mtx{W}$ is positive semi-definite.
Namely, we need to show that for any $\vec{r} \in \mathbb{R}^{c_1}$, $\vec{r}^T\mtx{W}^T\mtx{D}\mtx{W}\vec{r} \geq 0$. Define $\widetilde{\vec{r}} \triangleq \mtx{W} \vec{r}$, then we need to show that $\widetilde{\vec{r}}^T\mtx{D}\widetilde{\vec{r}} \geq 0$. This holds because, by denoting $\widetilde{r}_i$ as the $i^{\rm th}$ component of $\widetilde{\vec{r}}$ and $\mtx{D}_{i,i}$ as the $i^{\rm th}$ main diagonal component of $\mtx{D}$, we get $\widetilde{\vec{r}}^T\mtx{D}\widetilde{\vec{r}}=\sum_{i=1}^{c_1} \mtx{D}_{i,i}\widetilde{r}_i^2 \ge 0$ because $\mtx{D}_{i,i}\in[0,1]$ for any $i$. 
\end{proof}

Now we proceed to prove Theorem \ref{theorem: 2-layer tied autoencoder is a proximal mapping}, i.e., that a tied autoencoder $f$ from the class described in the theorem is a Moreau proximity operator. 
\begin{proof}
We prove that $f(\vec{x})$ is a Moreau proximity operator by showing that the Jacobian matrix of  $f(\vec{x})$ w.r.t.~any $\vec{x} \in \mathbb{R}^d$ satisfies two properties: (i) the Jacobian is a symmetric matrix, and (ii) all the Jacobian matrix eigenvalues are real and in the range of \([0,1]\). Note that previous works on plug and play prior used conditions (i)-(ii) to prove that special types of denoisers are Moreau proximity operators, for example, see \cite{sreehari2016plug}.

Consider a $2$-layer tied autoencoder, $f$,  which can be formulated as 
\[ f(\vec{x}) = \mtx{W}^{T} \rho(\mtx{W} \vec{x}) \]
where $\mtx{W} \in \mathbb{R}^{m \times d}$ has all its singular values in $[0,1]$, and $\rho: \mathbb{R}^{m}\rightarrow \mathbb{R}^{m}$ is a componentwise activation function as in (\ref{eq:componentwise activation - definition}) that is based on a differentiable scalar activation function $\bar{\rho}:\mathbb{R}\rightarrow\mathbb{R}$ whose derivative is in $[0,1]$.

We will now prove that $f$ is a Moreau proximity operator.

From Corollary \ref{corollary: Jacobian form} and  Lemma \ref{lemma: symmetric positive semi-definite}, we get that $f$ has a Jacobian matrix $\mtx{W}^{T} \mtx{D} \mtx{W}$, which is symmetric and positive semi-definite. 

We will now prove that the eigenvalues of $\mtx{W}^{T} \mtx{D} \mtx{W}$ are all in \([0,1]\).
First, notice that the singular values of $\mtx{D}$ are the same as the eigenvalues, which are the diagonal elements that are in \([0,1]\). In addition, the singular values of $\mtx{W}^{T}$ are the same as the singular values of $\mtx{W}$, which are in $[0,1]$ by the assumption of Theorem \ref{theorem: 2-layer tied autoencoder is a proximal mapping}.  
Hence, the singular values of each of the matrices in the product $\mtx{W}^{T} \mtx{D} \mtx{W}$ are in $[0,1]$. It is also well known that for every two matrices, $\mtx{A}\in\mathbb{R}^{q_1\times q_2}$, $\mtx{B}\in\mathbb{R}^{q_2\times q_3}$, 
\begin{equation}
\sigma_i(\mtx{AB}) \leq \sigma_1(\mtx{A}) \sigma_i(\mtx{B})
\end{equation}
where $\sigma_i$ denotes the $i^{\rm th}$ largest singular value of a corresponding matrix.
Hence, for $\mtx{C}\in\mathbb{R}^{q_3\times q_4}$, 
\begin{equation*}
\sigma_i(\mtx{ABC}) \leq \sigma_1(\mtx{A}) \sigma_1(\mtx{B}) \sigma_i(\mtx{C}).
\end{equation*}
In our case, $\sigma_1(\mtx{W}^{T})\leq 1$, $\sigma_1(\mtx{D})\leq 1$, $\sigma_i(\mtx{W})\leq 1$, and therefore 
\begin{equation}
\label{eq:bound on singular values of 2-layer tied autoencoder}
\sigma_i(\mtx{W}^{T} \mtx{D} \mtx{W}) \leq \sigma_1(\mtx{W}^{T}) \sigma_1(\mtx{D}) \sigma_i(\mtx{W}) \leq 1.
\end{equation}
Consequently, all the singular values of the Jacobian matrix are in $[0,1]$. Moreover, for real symmetric matrices, the absolute values of the eigenvalues are equal to the singular values.
Since the Jacobian of our $2$-layer tied autoencoder is real and symmetric, by (\ref{eq:bound on singular values of 2-layer tied autoencoder}) we get that the eigenvalues of this Jacobian are in \([-1,1]\). Moreover, by  Lemma \ref{lemma: symmetric positive semi-definite}, the Jacobian is symmetric positive semi-definite and therefore its eigenvalues are non-negative; accordingly,  all the  eigenvalues of the Jacobian are in \([0,1]\). 

To sum up, we showed that a 2-layer tied autoencoder that satisfies the conditions in Theorem \ref{theorem: 2-layer tied autoencoder is a proximal mapping} has a symmetric semi-positive definite Jacobian with eigenvalues in $[0,1]$; therefore, such a 2-layer autoencoder is a Moreau proximity operator.

\end{proof}

\section{Proof of Equation (\ref{eq: close form of (13)})}
\label{Appendix: Proof of equation (close form of (13))}

Recall the notations in (\ref{eq:simplifying notations for ADMM}). The optimization problem (\ref{eq: close form of (13)}), i.e., 
\begin{equation}
\widehat{\vecgreek{\xi}}^{(k)} = \argmin_{\vec{x}\in\mathbb{R}^d} \Ltwonorm{ \mtx{\Theta} \vec{x} - \vec{y}} + \frac{\gamma}{2} \Ltwonorm{ \vec{x} - \widetilde{\vec{v}}^{(k)}}
\end{equation}
has a closed form solution 
\begin{equation}
\widehat{\vecgreek{\xi}}^{(k)} = \left({\mtx{\Theta}^T \mtx{\Theta} + \frac{\gamma}{2}\mtx{I}}\right)^{-1}\left({\mtx{\Theta}\vec{y} + \frac{\gamma}{2}\widetilde{\vec{v}}^{(k)}}\right)
\end{equation}
Then, the diagonal structure of $\mtx{\Theta}$ with zeros and ones on its main diagonal implies that $\mtx{\Theta}^T \mtx{\Theta} = \mtx{\Theta}$ and, therefore, 
\begin{equation}
\widehat{\vecgreek{\xi}}^{(k)} = \left({\mtx{\Theta} + \frac{\gamma}{2}\mtx{I}}\right)^{-1}\left({\mtx{\Theta}\vec{y} + \frac{\gamma}{2}\widetilde{\vec{v}}^{(k)}}\right)
\end{equation}
that can be further simplified to the componentwise form of 
\[
\widehat{\xi_i}^{(k)} =
\begin{cases}
\widetilde{v}^{(k)}_i, & \text{if } \mtx{\Theta}_{i,i} = 0 \\
\frac{y_i + \frac{\gamma}{2}\widetilde{v}^{(k)}_i}{1 + \frac{\gamma}{2}}, & \text{if } \mtx{\Theta}_{i,i} = 1
\end{cases}
\]
where $\widehat{\xi}_i^{(k)}$, $y_i$, $\widetilde{v}^{(k)}_i$ are the $i^{\rm th}$ components of the vectors $\widehat{\vecgreek{\xi}}^{(k)}$, $\vec{y}$, $\widetilde{\vec{v}}^{(k)}$, respectively.

\section{The Examined Autoencoder Architectures}
\label{Appendix B: Model Architectures}

We trained two fully connected (FC) autoencoder architectures, one with 10 layers and one with 20 layers. We also trained a U-Net autoencoder model. The activation functions used for training the models were Leaky ReLU, PReLU, and Softplus. To ensure reproducibility, all experiments were with seed 42.

\subsection{Perfect Fitting Regime}
In the experiments that arrive to the perfect fitting regime, the FC models were trained on 600 images from Tiny ImageNet (at $64 \times 64 \times 3$ pixel size) up to a minimum MSE loss of $10^{{-8}}$, which can be considered as numerical perfect fitting. During training, intermediate models at higher train loss values were saved and used later for evaluation of the recovery at lower overfitting levels (see, e.g., Figure \ref{fig:results}). We trained two versions of the 10-layer and 20-layer fully connected models using Leaky ReLU and PReLU activations for each architecture.

The U-Net model was trained on 50 images from the SVHN dataset (at $32 \times 32 \times 3$ pixel size) also to an MSE train loss of $10^{{-8}}$, while saving intermediate models at higher train loss values. We examined U-Net architectures for three different activation functions: Leaky ReLU, PReLU, and Softplus. 

\subsection{Moderate Overfitting Regime}
In the moderate overfitting regime, we trained a 20-layer FC model on a larger subset of 25,000 images from Tiny ImageNet (at $64 \times 64 \times 3$ pixel size) with Leaky ReLU activations to a loss of $10^{{-4}}$. This achieves moderate overfitting, yet not perfect fitting of the training data. 

The U-Net model was trained on 1000 images from the CIFAR-10 dataset (at $32 \times 32 \times 3$ pixel size) to an MSE train loss of $10^{{-4}}$. We examined U-Net architectures for three different activation functions: Leaky ReLU, PReLU, and Softplus. 

\begin{figure*}[t]
    \centering
    \includegraphics[width=0.9\textwidth]{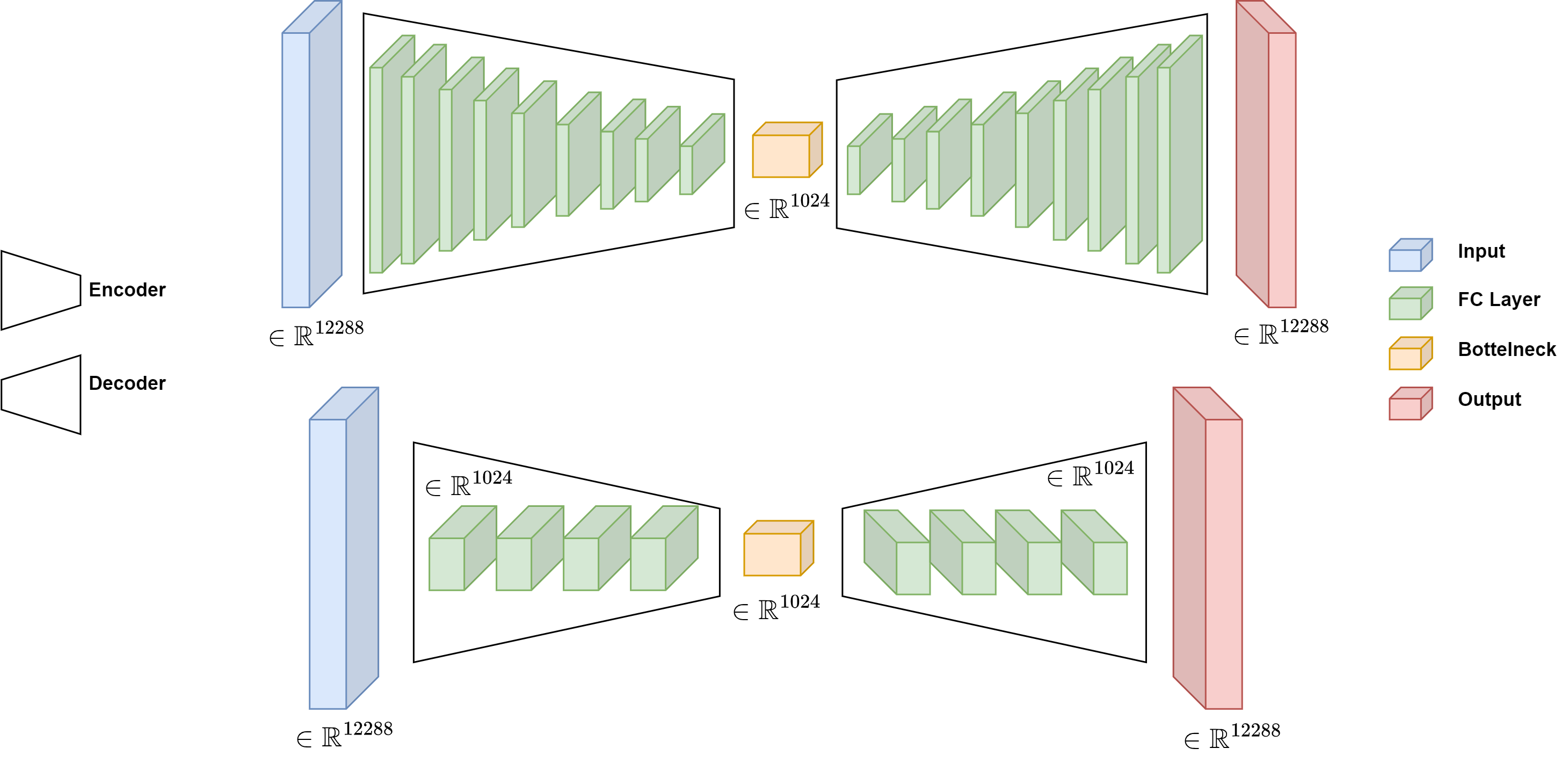}
    \caption{Architecture of 10 layers and 20 layers fully connected autoencoders for the Tiny ImageNet dataset (a subset of images, at $64 \times 64 \times 3$ pixel size).}
    \label{fig:FC}
\end{figure*}

\begin{figure*}[t]
    \centering
    \includegraphics[width=0.65\textwidth]{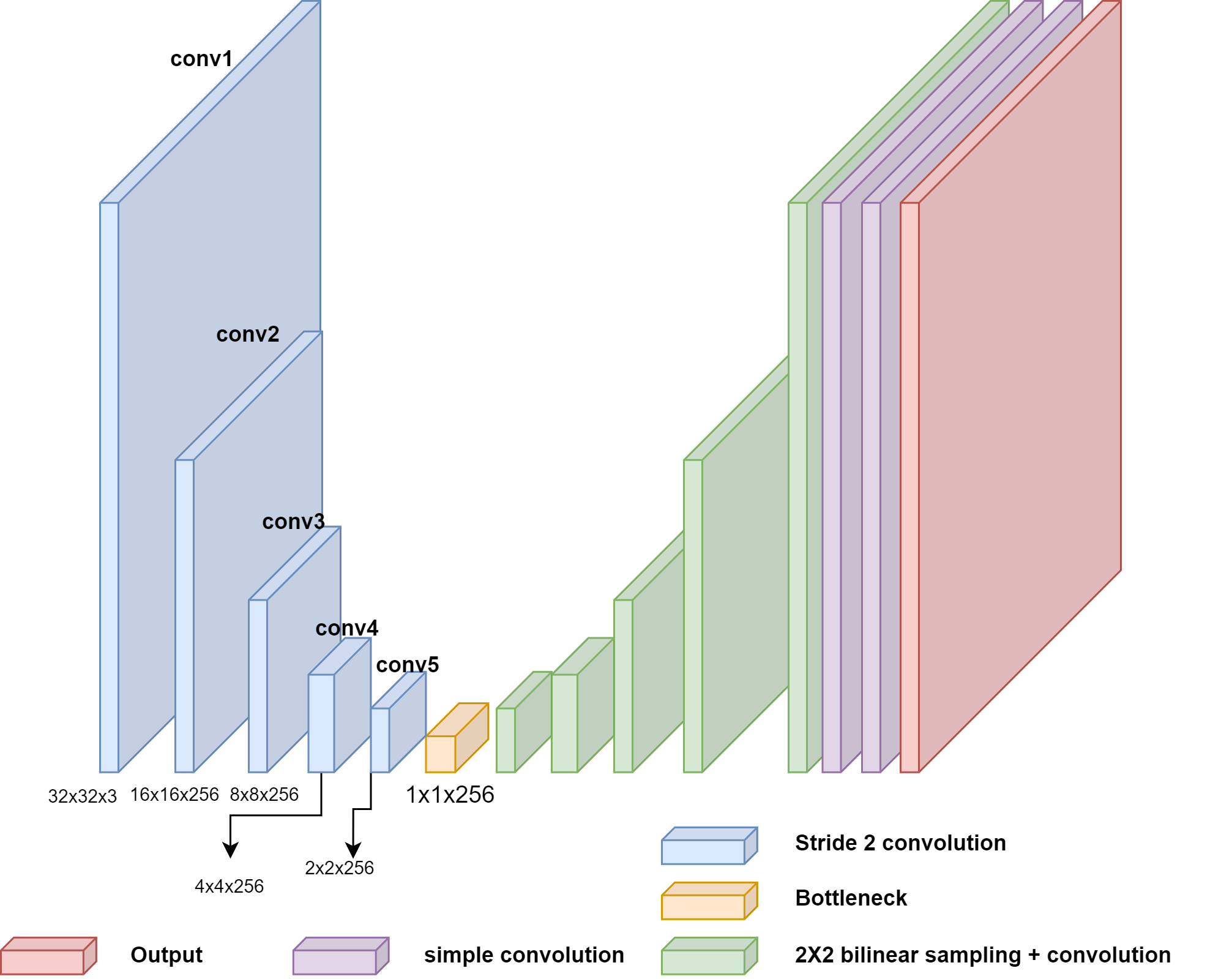}
    \caption{Architecture of U-Net autoencoder for the CIFAR-10 and SVHN datasets (subsets of images, at $32 \times 32 \times 3$ pixel size). stride and padding are 1, kernel size is $3 \times 3$, and an activation function is applied after every convolution.}
    \label{fig:U-Net}
\end{figure*}

\section{The Proposed Method: Additional Implementation Details}
\label{appendix:sec:The Proposed Method: Additional Implementation Details}
\subsection{Stopping Criterion of the Proposed Method}
\label{appendix:sec:Stopping Criterion of the Proposed Method}
The stopping criterion for the ADMM via Algorithm \ref{Algorithm:ADMM-based Solution to Optimization} (which solves equation (\ref{eq: alternating minimization - optimize x for fixed H})) is a predefined number of iterations. We set this to 40 iterations. Each alternating minimization iteration in our overall algorithm includes one ADMM procedure.

The stopping criterion for the entire recovery algorithm via alternating minimization, (\ref{eq: alternating minimization - optimize x for fixed H})-(\ref{eq: alternating minimization - optimize H for fixed x}), is that the MSE between successive estimates $\widehat{\vec{x}}^{(t)}$ must be below a threshold for 3 consecutive iterations; we set this MSE threshold to 
$10^{-9}$.

\subsection{\texorpdfstring{$\gamma$}{Gamma} Values for the Proposed Method}
\label{appendix:sec:gamma Values for the Proposed Method}
The $\gamma$ value of the ADMM (Algorithm \ref{Algorithm:ADMM-based Solution to Optimization}) was set as follows. 
$\gamma=0.5$ for the 10-layer FC autoencoder with LReLU activations. 
$\gamma=0.1$ for the 10-layer FC autoencoder with PReLU activations, and for the 20-layer FC autoencoder.
$\gamma=1$ for the U-Net architecture.

\section{The Special Case of the Proposed method for a Known \texorpdfstring{$\mtx{H}$}{H}}
\label{appendix:sec:The Special Case of the Proposed method for a Known H}
In this appendix, we will briefly discuss the case where the degradation operator $\mtx{H}$ is known. In this paper we focus on the case where $\mtx{H}$ is unknown, and use the special case of a known $\mtx{H}$ for the purpose of performance comparison.

Recall the inverse problem (\ref{eq: inverse problem with unknown H}) for the general case where $\mtx{H}$ is unknown. If $\mtx{H}$ is known, (\ref{eq: inverse problem with unknown H}) can be reduced into the form of 
\begin{align}
\label{eq: inverse problem with known H}
    \widehat{\vec{x}} = \underset{\vec{x}\in\mathbb{R}^d}{\arg\min} \Ltwonorm{ \mtx{H}\vec{x} - \vec{y}} + \lambda_1 s_f(\vec{x}).  
\end{align}
This problem has the same form as the optimization (\ref{eq: alternating minimization - optimize x for fixed H}) from the alternating minimization procedure, but here in (\ref{eq: inverse problem with known H}) we have the true $\mtx{H}$ instead of its estimate. 
Accordingly, we can address the optimization in (\ref{eq: inverse problem with known H}) via Algorithm \ref{Algorithm:ADMM-based Solution to Optimization} with $\mtx{\Theta}=\mtx{H}$.
Since we know $\mtx{H}$, and if we also know that the degradation is pixel erasure without additive noise (i.e., $\sigma_{\epsilon}=0$ in the degradation model (\ref{eq: deteriorated training sample})) we improve the output of Algorithm \ref{Algorithm:ADMM-based Solution to Optimization} by setting the known (non-erased) pixels from $\vec{y}$ in the estimate $\widehat{\vec{x}}$, i.e., 
\begin{equation}
    \widehat{\vec{x}}^{{\rm known}~\mtx{H}} = (\mtx{I}_d - \mtx{H})\widehat{\vec{x}} + \mtx{H}\vec{y}.
\end{equation}

\section{Additional Experimental Results: Analysis of the Experiments of Figure \ref{fig:hard_threshold} using Other Performance Metrics}
\label{Appendix C: Additional Experiments}

Figure \ref{fig:hard_threshold} shows the recovery performance in terms of \textit{accurate recovery} rates, where the threshold for considering a successful recovery is MSE of $10^{-7}$ (PSNR 70dB) between the estimated and original images. 
Here, Figure \ref{fig:results} shows the corresponding \textit{approximate recovery} rates, where the threshold for considering a successful recovery is MSE of $5\cdot 10^{-4}$ (PSNR 33.01dB) between the estimated and original images and in addition, Figure \ref{fig:approximate_illustration} visually illustrate approximate recovered images with MSE error close to the threshold. 
Moreover, Figure \ref{fig:psnr_threshold} shows the corresponding \textit{average PSNR} of all the estimates (i.e., the averaging is over all the samples in the training dataset and not only those who are considered as recovered by one of the specified thresholds).

To further understand the effect of the recovery threshold definition, in Figure \ref{fig:varying_threshold} we show the recovery rate behavior of the training images as the for several recovery threshold definitions between MSE values of $10^{-3}$ and $10^{-6}$. 

The definition of the approximate recovery rate based on MSE of $5\cdot 10^{-4}$ (PSNR 33.01dB) in Figure \ref{fig:results} shows that our proposed method significantly outperforms the previous approach of autoencoder iterations only (see, for example, results for masks 1,2,4 in Figure \ref{fig:results}). However, this definition of approximate recovery also introduces an artifact in the form of some approximate recovery percentages for the generic inpainting (DDNM) method (see, for example, the results for masks 3,5,6 in Figure \ref{fig:results}), which can be explained by the corresponding average PSNR values in Figure \ref{fig:psnr_threshold} (see, for example, the results for masks 3,5,6 in Figure \ref{fig:psnr_threshold} where the red lines show that the generic inpainting DDNM has average PSNR near the approximate recovery threshold PSNR of 33.01). For this reason, we focus on the accurate recovery rate performance that is shown in Figure \ref{fig:hard_threshold} and also suggest to examine the impressive performance gain of our method in terms of average PSNR in Figure \ref{fig:psnr_threshold}.

\begin{figure*}[t!]
    \centering
    \includegraphics[width=0.98\textwidth]{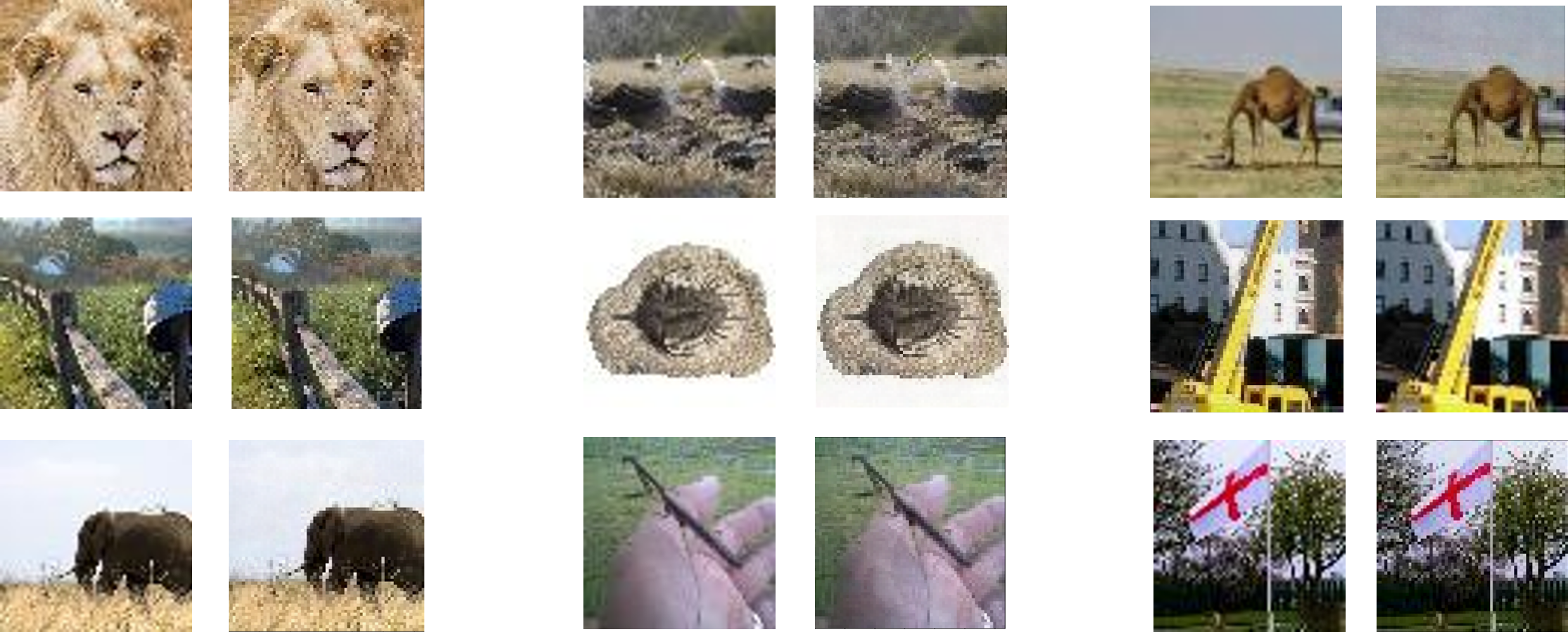}
    \caption{Recovered training images, with MSE reconstruction error slightly below the approximate recovery threshold, by the proposed method. The MSE of every reconstructed image is between $4 \cdot 10^{-4}$ to $5 \cdot 10^{-4}$. For each image, we show the original image (left), and the recovered training image with an MSE error close to the threshold (right).}
    \label{fig:approximate_illustration}
\end{figure*}

\begin{figure*}[t!]
    \centering
    \includegraphics[width=0.98\textwidth]{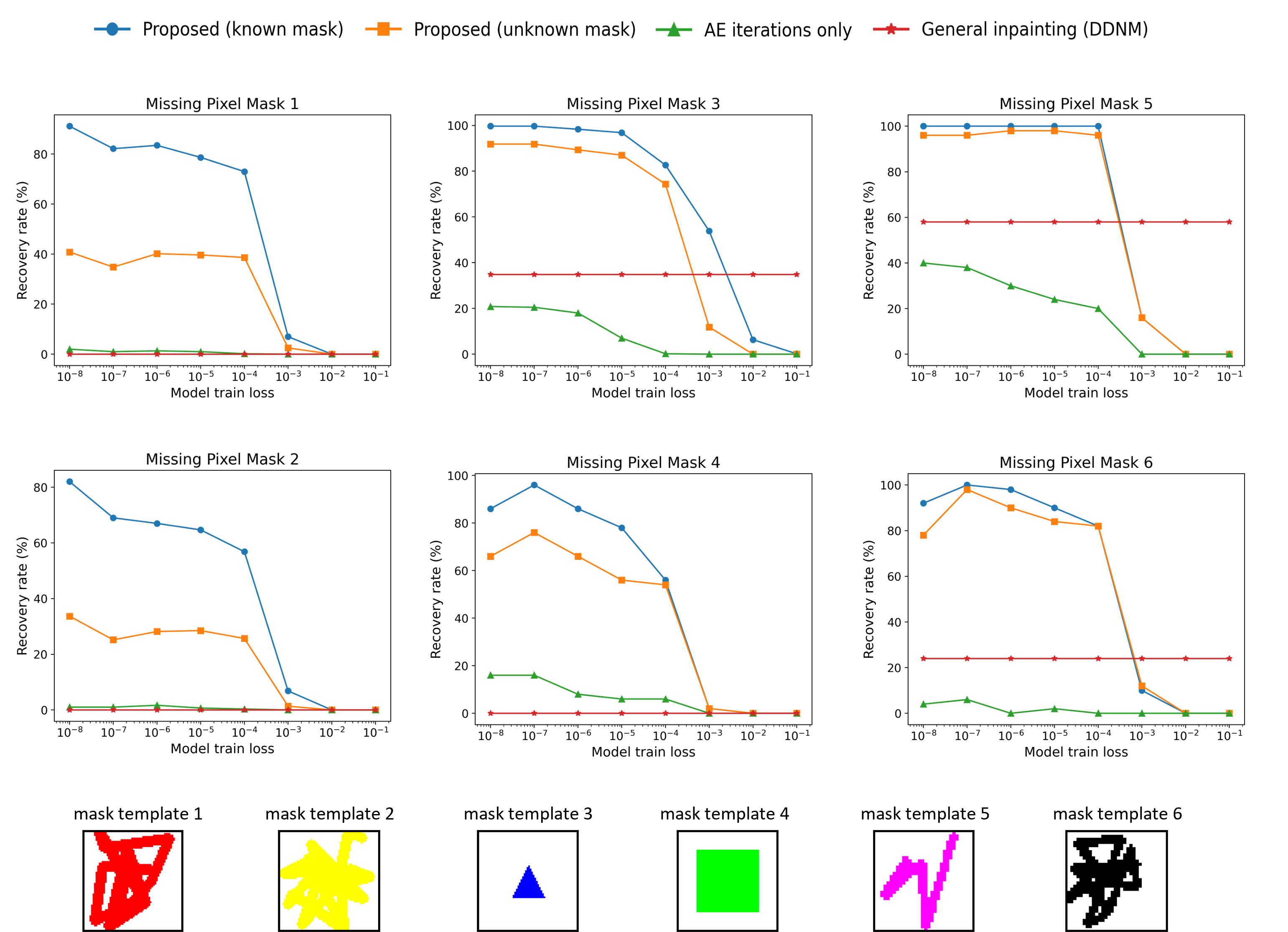}
    \caption{\textbf{Approximate recovery rate} results. Besides the definition of the recovery rate threshold (and the visual presentation of the generic inpainting DDNM performance in red curves), all the other setting details are as in Figure \ref{fig:hard_threshold}.}
    \label{fig:results}
\end{figure*}

\begin{figure*}[t]
    \centering
    \includegraphics[width=0.98\textwidth]{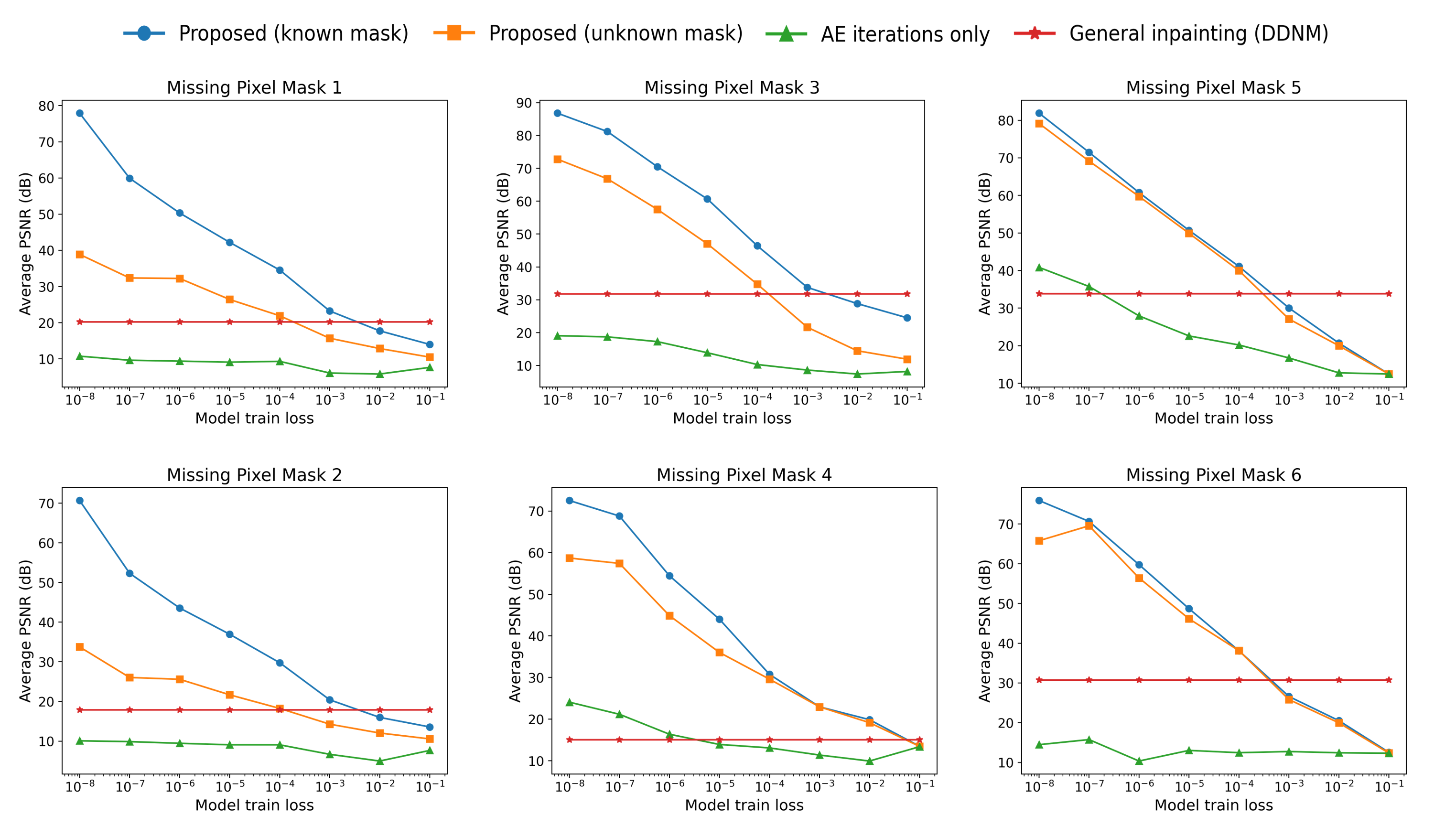}
    \caption{\textbf{Average PSNR} results for degraded image reconstruction corresponding to Figure \ref{fig:hard_threshold} and Figure \ref{fig:results}. The average is computed over the recovery attempts of all the training dataset. All the other setting details are as in Figure \ref{fig:hard_threshold}.}
    \label{fig:psnr_threshold}
\end{figure*}


\begin{figure*}
    \centering
    \includegraphics[width=0.98\textwidth]{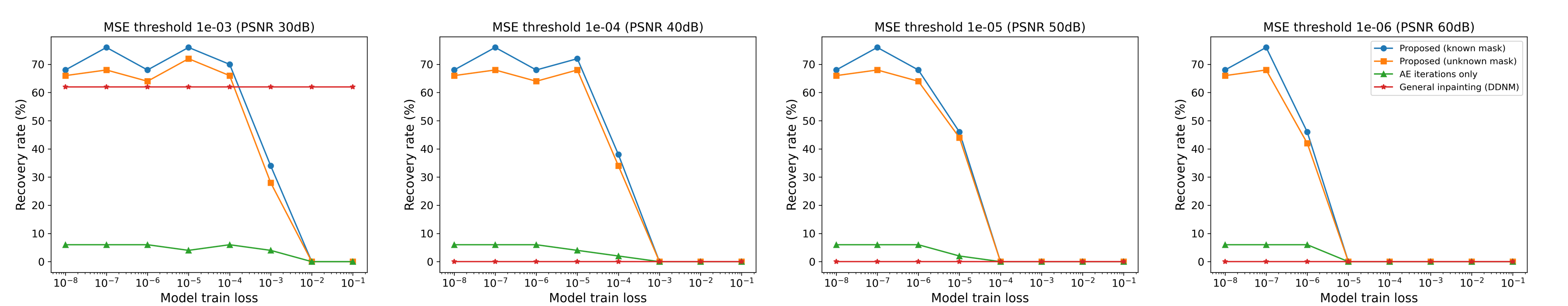}
    \caption{The recovery rate results according to several MSE thresholds. These results correspond to the experiment on the U-Net architecture and mask 6.}
    \label{fig:varying_threshold}
\end{figure*}

\section{Additional Experimental Results: Degradation of Pixel Erasure with Additive Noise}
\label{appendix:sec:Experiments with additive noise}
In this appendix section, we evaluate an experiment with additive noise in addition to the degradation operator. Recall the degradation model (\ref{eq: deteriorated training sample}) and note that here we have noise standard deviation of $\sigma_{\epsilon}=0.02$. The degradation operator, $\mtx{H}$, is based on mask 1 that appears, e.g., in Figure \ref{fig:hard_threshold}. Our method uses here $\gamma=0.1$.

Figure \ref{Additive noise results} demonstrates that, also when additive Gaussian noise is added to the pixel erasure degradation, the proposed method can significantly improve the recovery performance compared to the simple iterations-only approach.

\begin{figure*}
    \centering
    \includegraphics[width=0.97\textwidth]{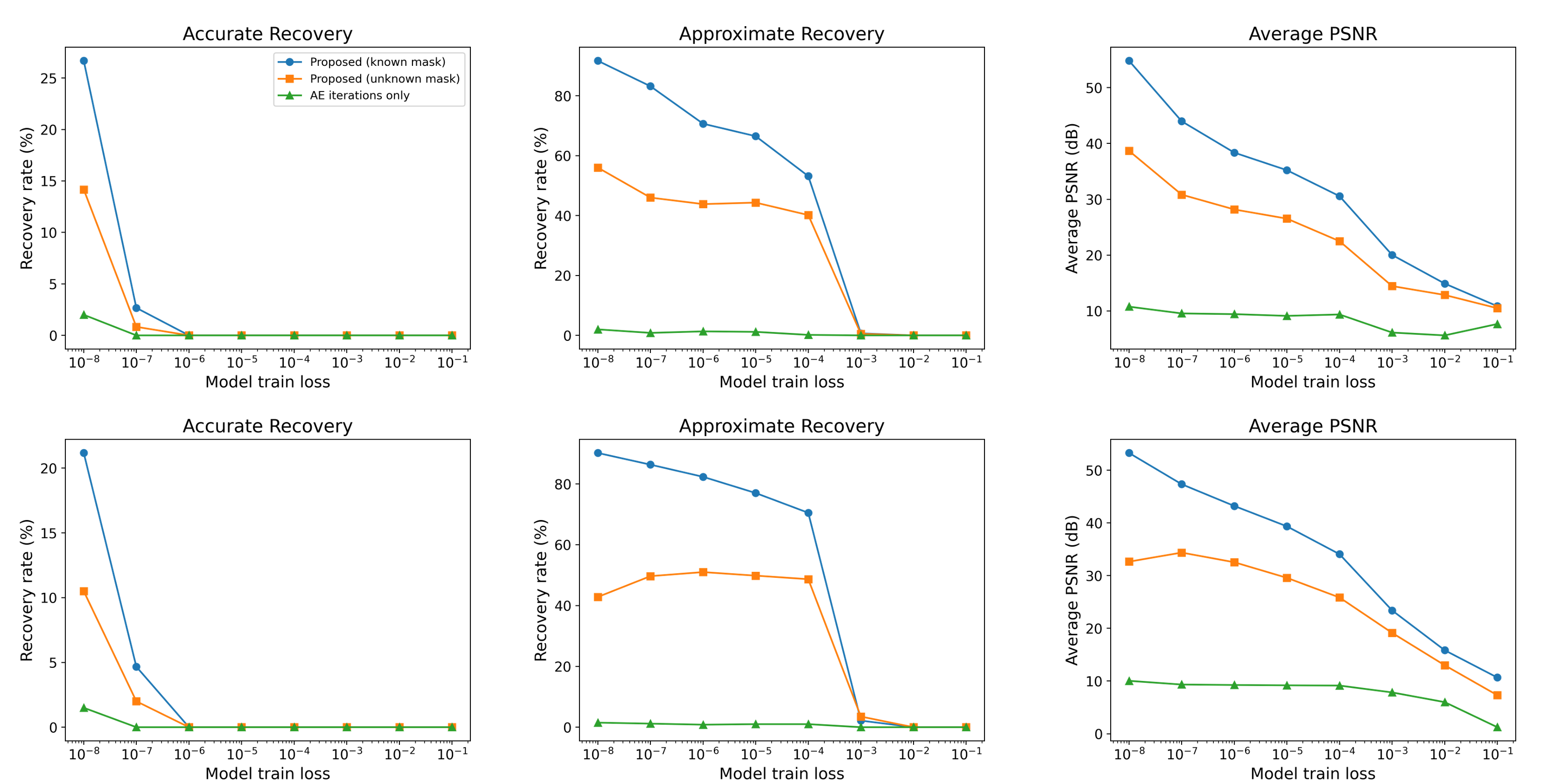}
  \caption{Recovery results of degraded samples with additive noise ($\sigma_{\epsilon}=0.02$). The results are of two architectures, a 10-layer (first row) and a 20-layer (second row) fully-connected autoencoders with LReLU activations.}
  \label{Additive noise results}
\end{figure*}

\section{Expiremental details of the DDNM}
\label{appendix:DDNM}
For the DDNM method, we used a pre-trained DDNM model designed to handle input images of 256x256 pixel size. To utilize this DDNM model on our Tiny-ImageNet dataset, which consists of images of 64x64 pixel size, we initially downscaled the images to match the DDNM model's input dimensions of 256x256. Subsequently, we applied the DDNM model to the resized images and then upscaled them back to their original size of 64x64. While ideally, a DDNM pre-trained model tailored to 64X64 inputs would be preferable, the authors of the DDNM suggest in the DDNM GitHub repository that the upscaling and downscaling approach with the 256x256 DDNM model can yield satisfactory results. The effectiveness of this approach is demonstrated in Figure \ref{fig:DDNM_illustration}, showcasing the good performance of the DDNM as a generic inpainting method.

\begin{figure*}[b!]
    \centering
    \includegraphics[width=0.98\textwidth]{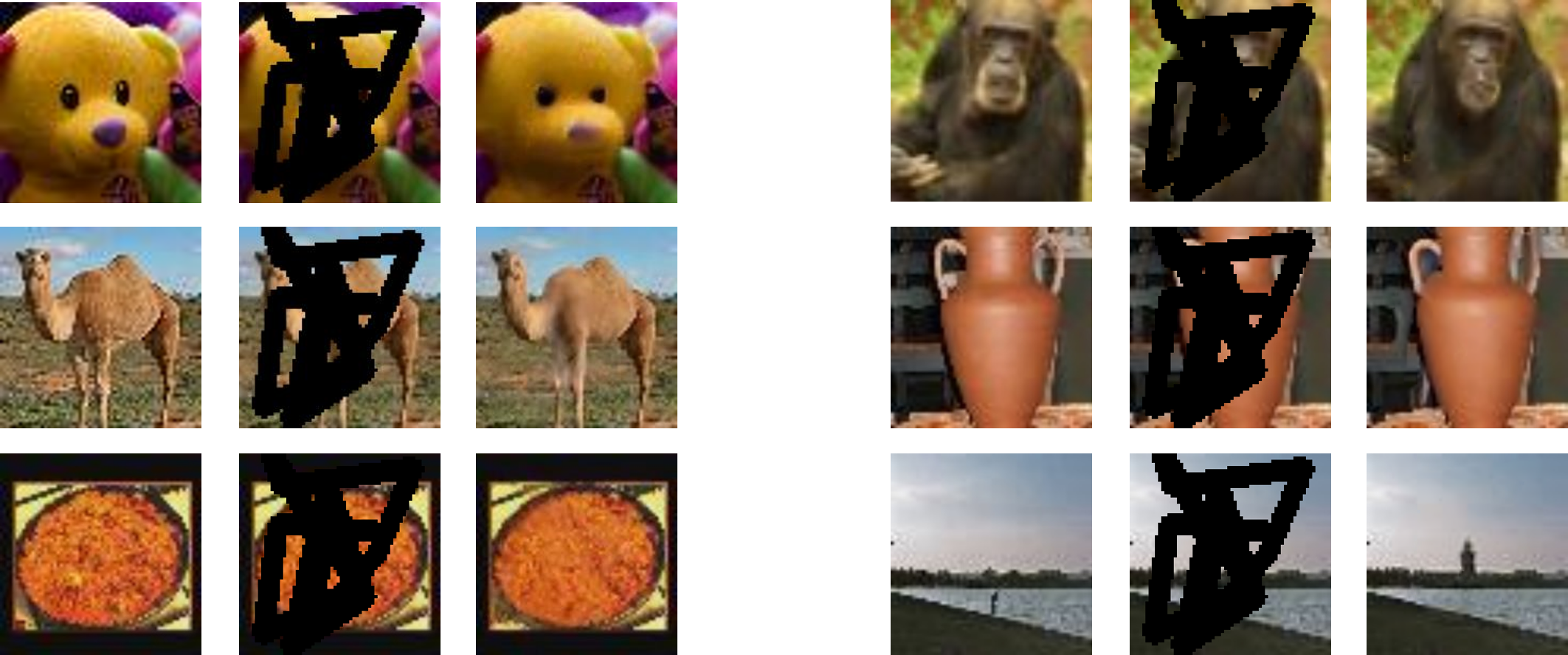}
    \caption{The Performance of the DDNM method on Tiny-ImageNet dataset. Here, for each image, we show the original image (left), the degraded image (middle), and the reconstructed image by the DDNM method (right).}
    \label{fig:DDNM_illustration}
\end{figure*}

\end{document}